\newtheorem{theorem}{Theorem}
\newtheorem{lemma}{Lemma}
\newtheorem{remark}{Remark}
\newtheorem{definition}{Definition}
\newtheorem{assumption}{Assumption}
\DeclareMathOperator{\Tr}{Tr}
\newcommand{\define}{\triangleq}
\newcommand{\beq}{\begin{equation}}
\newcommand{\eeq}{\end{equation}}
\newcommand{\bea}{\begin{eqnarray}}
\newcommand{\eea}{\end{eqnarray}}
\newcommand{\R}{{\ensuremath{\mathbb{R}}}}
\title{\LARGE \bf
	Lyapunov-Based Reinforcement Learning \\State Estimator}
\begin{document}
	
	
	\author{Liang Hu$^{1}$,  Chengwei Wu$^{2}$, and Wei Pan$^{3}$
		\thanks{$^{1}$Liang Hu is with the School of Computer Science and Electronic Engineering, University of Essex, UK.
			\texttt{\small Email: l.hu@essex.ac.uk}.}%
		\thanks{$^{2}$Chengwei Wu is with the School of Astronautics, Harbin Institute of Technology, China. He is also with the Department of Cognitive Robotics, Delft University of Technology, Netherlands. \texttt{\small Email: c.wu-1@tudelft.nl}.}
		\thanks{$^{3}$Wei Pan is with the Department of Cognitive Robotics, Delft University of Technology, Netherlands. \texttt{\small Email: wei.pan@tudelft.nl}.}%
	}
	\maketitle
	
	\begin{abstract}
		In this paper, we consider the state estimation problem for nonlinear discrete-time stochastic systems. We combine Lyapunov's method in control theory and deep reinforcement learning to design the state estimator. We theoretically prove the convergence of the bounded estimate error solely using the data simulated from the model. An actor-critic reinforcement learning algorithm is proposed to learn the state estimator approximated by a deep neural network. The convergence of the algorithm is analysed. The proposed Lyapunov-based reinforcement learning state estimator is compared with a number of existing nonlinear filtering methods through Monte Carlo simulations, showing its advantage in estimating convergence even under some system uncertainties such as covariance shift in system noise and randomly missing measurements. This is the first reinforcement learning-based nonlinear state estimator with bounded estimate error performance guarantee to the best of our knowledge.
	\end{abstract}
	
	\begin{IEEEkeywords}
		Nonlinear filtering, deep reinforcement learning, Lyapunov stability.
	\end{IEEEkeywords}

	\section{Introduction}
	State estimation, inferring unknown states using noisy measurements and underlying model of the dynamic system, has found its important application in different areas ranging from control engineering, robotics, tracking and navigation to machine learning \cite{anderson2012optimal,thrun2005probabilistic,bar2004estimation,sarkka2013bayesian}. For linear stochastic systems with Gaussian noise, the renowned Kalman filter \cite{kalman1960new} is proved to give the optimal estimate with elegant convergence properties in estimate error covariance. For nonlinear stochastic systems, since the probability distribution of the state does not preserve the property of Gaussian, a closed-form estimator like Kalman filter generally does not exist. As a result, many nonlinear state estimation methods based on different approximation techniques have been proposed, among which the extended Kalman filter (EKF), unscented Kalman filter (UKF) \cite{julier2004unscented} and particle filter (PF) \cite{doucet2000sequential} have been most widely used. For nonlinear systems with Gaussian noise, the EKF and UKF approximate the system state by a Gaussian distribution using the linearisation technique and the deterministic sampling method. For nonlinear non-Gaussian systems, the PF uses sequential Monte Carlo methods to approximate distribution of the state by a finite number of particles.

	A fundamental problem of state estimation is how to design a state estimator such that the estimation error can be guaranteed to be bounded. This problem is challenging and scarcely addressed only in certain scenarios under some assumptions \cite{boutayeb1997convergence,reif1999stochastic,li2012stochastic}. In \cite{boutayeb1997convergence}, it is proved that if the linearisation error is negligible, the local asymptotic stability of the EKF can be ensured. In \cite{reif1999stochastic}, the estimate error of the discrete-time EKF remains bounded only if the initial estimate error is sufficiently small and system noise intensity is small. In \cite{li2012stochastic}, similar results can also be found for the UKF. Nonetheless, it is hard to reduce the initial estimate errors and linearisation errors in the deployment of the EKF and UKF, making these convergence conditions less applicable.
	For the PF, though it is theoretically proved that the estimate performance converges asymptotically to the optimal estimates as the number of particles goes to infinity in \cite{crisan2002survey}, only a limited particle can be used in practice due to the expensive online computation as the number of particle increases. Unfortunately, there are not any guarantees on estimation performance for the PF with limited particles. By summarising the EKF, UKF and PF observations, it is expected that an ideal state estimator can estimate the state with bounded estimate error performance guarantee for nonlinear non-Gaussian systems. Moreover, the performance guarantee should not rely on conditions of the initial estimate error, the system noise level, or the degree of nonlinearity of the underlying dynamic system. In this paper, we are interested in seeking a reinforcement learning (RL) based method to design such a state estimator.
	
	RL was first applied to the state estimation problem in \cite{morimoto2007reinforcement}, where impressive estimation performance was shown. However, the theoretical guarantee on the convergence of bounded estimate error was unavailable, thus making it less applicable for practical applications \cite{bucsoniu2018reinforcement}. More recently, an RL based Kalman filter design method was proposed in \cite{tsiamis2019sample}, in which the bounded estimate error can be guaranteed by using finite samples for stable linear systems. {Another closely related research is (deep) neural network based state estimation \cite{yadaiah2006neural, wilson2009neural,krishnan2015deep} where neural networks are used to learn system models. Recurrent neural networks (RNNs) for state estimation
		has been explored in \cite{yadaiah2006neural, wilson2009neural}, and DNNs has been introduced for Kalman filtering in \cite{krishnan2015deep}.
		Inspired by these works, we will combine deep learning and reinforcement learning, a.k.a., deep reinforcement learning (DRL), to design a state estimator in which the filter gain function will be learned using a deep neural network (DNN) from the sequence of estimate errors. Once the estimator is trained well offline, it can be deployed online and supposed to be efficient given the advance in DNN microprocessor for online applications. The key questions are how to prove the bounded estimate error guarantee in an RL setup and design an efficient learning algorithm with such a guarantee.}

	Motivated by \cite{reif1999stochastic,li2012stochastic,boutayeb1997convergence}, we plan to prove the stability of the estimate error dynamics; after that, the bounded estimate error can be guaranteed. In particular, we make no assumptions typically used in these results, such as initial estimate error, linearisation errors or assumptions on model nonlinearities, etc. Similar to \cite{morimoto2007reinforcement}, the mathematical model can be seen as a simulator, and the training of estimator is solely from data simulated using the mathematical model. Unfortunately, the data-based stability analysis of closed-loop systems in a model-free RL manner is still an open problem \cite{bucsoniu2018reinforcement,gorges2017relations}.
	Typically, Lyapunov's method in control theory is widely used to analyse the stability of dynamical systems. In \cite{postoyan2017stability}, the stability of a deterministic nonlinear system was analysed using Lyapunov's method, assuming that the discount of the infinite-horizon cost is sufficiently close to $1$. However, such an assumption makes it difficult to guarantee the learned policy's optimality without introducing certain extra assumptions \cite{murray2003adaptive,abu2005nearly,jiang2015global}. As a basic tool in control theory, the construction/learning of the Lyapunov function is not trivial \cite{prokhorov1994lyapunov,prokhorov1999application}. In~\cite{perkins2002lyapunov}, the RL agent controls the switch between designed controllers using Lyapunov domain knowledge so that any policy is safe and reliable. \cite{petridis2006construction} proposes a straightforward approach to construct the Lyapunov functions for nonlinear systems using DNNs. In \cite{richards2018lyapunov,berkenkamp2017safe}, a learning-based approach for constructing Lyapunov neural networks to ensure stability was proposed based on the assumption that the learned model is a Gaussian process, Lipschitz continuous and on discretised points in the subset of low-dimensional state space. Only until recently, the asymptotic stability in model-free RL is given for robotic control tasks \cite{han2020actor}. In \cite{zhang2020model-1, zhang2020model-2}, the stability of a system with a combination of a classic baseline controller and an RL controller is proved for autonomous surface vehicles with collisions.

	In summary, we will combine Lyapunov's method in control theory and deep reinforcement learning to design state estimators for nonlinear stochastic discrete-time systems with bounded estimate error guarantee.
	The theoretical result is obtained by solely using the data simulated from the model. In our method, the estimator is trained/learned offline and then deployed directly using a DNN. Moreover, our state estimator is shown to be robust to system uncertainties, i.e., unknown measurement noise, missing measurement and non-Gaussian noise, which makes our method more applicable.
	The main contribution of the paper has twofold:
	\begin{enumerate}
		\item For the first time, a deep reinforcement learning method has been employed for nonlinear state estimator design;
		\item The bounded estimate error can be theoretically guaranteed by solely using data regardless of the degree of model nonlinearities and noise distribution.
	\end{enumerate}
	This is the first reinforcement learning-based nonlinear state estimator design method with bounded estimate error guarantee to the best of our knowledge.

	The rest of the paper is organised as follows. In Section \ref{sec: problemformulation}, the state estimation problem of nonlinear stochastic discrete-time systems is formulated. In Section \ref{sec:theory}, the theoretical result on bounded estimate error guarantee is proved. In Section \ref{sec:algorithm}, the learning algorithm is derived and the algorithm convergence is analysed. In Section \ref{sec:Sim}, our method is compared with EKF, UKF and PF in simulations. Conclusion is given in Section \ref{sec:conclusion}.

	\emph{Notation:} The notation used here is fairly standard except where otherwise stated. $\mathcal{Z}_{+}$ denotes the set of non-negative integrals.  ${\mathbb R}^{n}$ and ${\mathbb R}^{n \times m}$ denote, respectively, the $n$ dimensional Euclidean space and the set of all $n \times m$ real matrices. $A^{\top}$ represents the transpose of $A$, and ${\mathbb E}\{x\}$ stands for the expectation of the stochastic variable $x$. $x \sim \mathcal{N} (m, N)$ with $m \in {\mathbb R}^{n}$ and $N \in {\mathbb R}^{n \times n}$ denotes the probability function of the random variable $x$ follows a Gaussian distribution with $m$ and $N$ as the expectation and covariance, respectively. $\|x\|$ denotes 2-norm of the vector x, i.e., $\|x\|=x^{\top}x$.

	\section{Problem Formulation}\label{sec: problemformulation}
	In this paper, we consider the state estimation problem for the following nonlinear discrete-time  stochastic systems:
	\begin{equation}\label{equ01}
	\begin{split}
	x_{k+1}&=f(x_{k})+w_{k},\\
	y_{k}&=g(x_{k})+v_{k},
	\end{split}
	\end{equation}
	where $x_{k} \in \R^{n}, y_{k} \in \R^m$ are the state and measurement, respectively, and $w_{k}$ and $v_{k}$ are stationary stochastic noise. The nonlinear functions $f(\cdot)$ and $g(\cdot)$ and the  probability distributions of process noise $w_{k}$ and measurement noise $v_{k}$ are assumed to be all known, and the noise may be non-Gaussian.
	
	\subsection{State estimation}
	To estimate the state of system \eqref{equ01}, the state estimator is typically designed in the following form:
	\begin{equation}\label{equ02}
	\hat{x}_{k+1}=f(\hat{x}_{k})+a(\hat{x}_{k})e_{k+1}
	\end{equation}
	where $\hat{x}_{k}$ is the estimate of state $x_{k}$, $a(\cdot)$ is a linear/nonlinear function that can be calculated using various approximation methods and the measurement prediction error is given as follows:
	\begin{equation}\label{equ03}
	e_{k+1}=y_{k+1}-g(f(\hat{x}_{k}))
	\end{equation}
	
	The form of the state estimator in \eqref{equ02} is standard and widely used in some existing estimation algorithms, such as the Kalman filter (KF) and extended Kalman filter (EKF). In the EKF, the state estimator of \eqref{equ01} is given as follows:
	\begin{equation}
	\begin{split}
	\hat{x}_{k+1}=f(\hat{x}_{k})+K_k e_{k+1}
	\end{split}
	\end{equation}
	where $K_k$ is the estimator gain at time instant $k$ that is calculated using partial derivatives of the $f$ and $g$ at $\hat{x}_{k}$. As a result, the estimator gain $K_k$ is actually a function of $\hat{x}_{k}$.
	Similarly, in the unscented Kalman filter (UKF), $a(\hat{x}_{k})$ in \eqref{equ02} is instantiated using deterministic sampling methods. As for the particle filter (PF), even though no estimator gain is used explicitly, the importance weights of particles could be viewed as the function of $a(\hat{x}_{k})$ in the sampling form. In this paper, our high-level plan is to approximate $a(\cdot)$ as a generic nonlinear function, i.e., a deep neural network, which can be learned from estimate errors data $x_{k}-\hat{x}_{k}$ over time.

	\begin{remark}
		This paper will focus on the state estimation problem while two other closely related problems need to be remarked, i.e., state prediction and smoothing. In the state-prediction, measurement up to the current time instant is used to predict the state in the future; in the state-smoothing, measurement up to the current time instant is used to interpolate the state in the past. Our method proposed can be easily adapted to state prediction and smoothing problems as well.
	\end{remark}
	
	\begin{definition}{\cite{reif1999stochastic} }
		The estimate error $\tilde{x}_{k}\define x_{k}-\hat{x}_{k}$ is said to be exponentially bounded in mean square if $\exists~\eta>0$ and $0<\varphi<1$, such that
		\begin{equation}\label{covergence}
		\mathbb{E}[\|\tilde{x}_{k}\|^2]\leq \eta\mathbb{E}[\|\tilde{x}_{0}\|^2]\varphi^{k}+p,
		\end{equation}
		holds at all the time instants $k\geq 0$, where $p$ is a positive constant number.
	\end{definition}

	In this paper, we aim to learn the state estimator policy function $a(\cdot)$ in the estimator \eqref{equ02} using deep reinforcement learning such that the estimate error of the estimator \eqref{equ02} is guaranteed to converge exponentially to a positive bound in the mean square, as defined in Definition 1. { In the following, we will introduce the background on reinforcement learning and show how to analyse the stability using Lypapunov's method.}

	\section{Reinforcement Learning and Data-Based Lyapunov Stability Analysis}\label{sec:theory}
	
	{
		
		In this section, the filtering error dynamics is described as a Markov decision process. Then, some preliminaries of reinforcement learning are presented. Next, a theorem is proposed to prove the boundness of estimate error. 
		
		\subsection{Markov Decision Process}
		
		The estimate error dynamics of the state estimator design \eqref{equ02} can be described by a Markov decision process (MDP), which is defined as a tuple
		$<\mathcal{S},~\mathcal{A},~\mathcal{P},~\mathcal{C},~\gamma >$. Here, $\mathcal{S}$ is the state space, $\mathcal{A}$ is the action space, $\mathcal{P}$ is the transition probability distribution, $\mathcal{C}$ is the cost\footnote{We will use cost instead of reward in this paper which is often used in control literature. Maximisation in RL setup will be minimisation instead.}, and $\gamma\in [0,~1)$ is the discount factor. Then we have
		\begin{equation}\label{MDP}
		\tilde{x}_{k+1} \sim \mathcal{P}\left(\tilde{x}_{k+1} | \tilde{x}_{k}, a_{k}\right), \forall k \in \mathbb{Z}_{+},
		\end{equation}
		where $\tilde{x}_{k}$ and $\mathcal{C}_{k}$ are the state and cost at time instant $k$ and $\tilde{x}_{k} \in \mathcal{S}$. An action $a(\cdot)$ at the state  $\tilde{x}(\cdot)$ is sampled from the policy  $\pi(a(\cdot)|\tilde{x}(\cdot))$. The standard state estimator \eqref{equ02} is naturally a special case of \eqref{MDP}.
		
		\subsection{Reinforcement learning} \label{subsec:RL}
		
		In this paper, the objective is to find an optimal policy to minimise the expected accumulated cost as a value function:
		\begin{equation}
		V_{{\pi}}\left(\tilde{x}_{k}\right)=\sum_{k}^{\infty}\sum_{a_{k}}{\pi}\left(a_{k}|{x}_{k}\right)\sum_{\tilde{x}_{k+1}}\mathcal{P}_{k+1|k}\big(\mathcal{C}_k+\gamma V_{{\pi}}(\tilde{x}_{k+1}) \big),
		\label{eq:V_Func}
		\end{equation}
		where $\mathcal{P}_{k+1|k}=\mathcal{P}\left(\tilde{x}_{k+1}\left|\tilde{x}_{k},a_{k}\right.\right)$ is the transition probability of the estimate error $\tilde{x}_k$, $\mathcal{C}_{k}=\mathcal{C}(\tilde{x}_{k},a_{k})$ is the cost function where we are interested in the quadratic form $\mathcal{C}_{k}=\tilde{x}^{\top}_{k}\tilde{x}_{k}$ in this paper, $\gamma\in\left[0,1\right)$ is a constant discount factor, and ${\pi}\left(a_{k}|\tilde{x}_{k}\right)$ is a policy to be learned. In RL, the policy (nonlinear filter gain function in this paper) ${\pi}\left(a_{k}|\tilde{x}_{k}\right)$ is typically a Gaussian distribution: 
		\begin{equation}
		{\pi}\left(a|\tilde{x}\right)=\mathcal{N}\left(a\left(\tilde{x}\right), {\sigma}\right), \label{eq:Policy}
		\end{equation}
		from which an action $a_{k}\in\mathcal{U}$ at the state $\tilde{x}_{k}\in\mathcal{S}$ is sampled \cite{sutton2018reinforcement}. During the inference, the mean value ${a}\left(\tilde{x}\right)$ is applied.

		During the training process, a Q-function $Q_{{\pi}}\left(\tilde{x}_{k},a_{k}\right)$ (i.e., the action-value function) is practically minimised. $Q_{{\pi}}\left(\tilde{x}_{k},a_{k}\right)$ is given as
		\begin{equation}
		Q_{{\pi}}\left(\tilde{x}_{k},a_{k}\right)=\mathcal{C}_k+\gamma \mathbb{E}_{\tilde{x}_{k+1}}\left[V_{{\pi}}(\tilde{x}_{k+1}) \right],\label{eq:Action-Value Func}
		\end{equation}
		where $\mathbb{E}_{\tilde{x}_{k+1}}\left[\cdot\right]=\sum_{\tilde{x}_{k+1}}\mathcal{P}_{k+1|k}\left[\cdot\right]$ is an expectation operator over the distribution of $\tilde{x}_{k+1}$.
		
		Soft actor-critic (SAC) algorithm is one of the state-of-the-art off-policy actor-critic RL algorithms \cite{Haarnoja2018SAC1}. In SAC, an entropy item is added to the Q-function as a regulariser, with which the exploration performance becomes adjustable. Based on \eqref{eq:Action-Value Func}, the Q-function with the entropy item is described as
		\begin{align}
		Q_{\pi}\left(\tilde{x}_{k},a_{k}\right)=& \mathcal{C}_k+\gamma \mathbb{E}_{\tilde{x}_{k+1}}\left[V_{\pi}(\tilde{x}_{k+1}) \right.\nonumber \\
		&\left.-\alpha\mathcal{H}\left(\pi\left(a_{k+1}|\tilde{x}_{k+1}\right)\right)\right],\label{eq:Q_SAC}
		\end{align}
		where $\mathcal{H}\left({\pi}\left(a_{k}|\tilde{x}_{k}\right)\right)=-\sum_{a_{k}}{\pi}\left(a_{k}|\tilde{x}_{k}\right)\ln\left({\pi}\left(a_{k}|\tilde{x}_{k}\right)\right)=-\mathbb{E}_{{\pi}}\left[\ln\left({\pi}\left(a_{k}|\tilde{x}_{k}\right)\right)\right]$ is the entropy of the policy, and $\alpha$ is a temperature parameter. It is defined to determine the relative importance of the entropy term \cite{Haarnoja2018SAC1}.
		
		Thus, the algorithm is to solve the following optimisation problem.
		\begin{align}
		{\pi}^{\ast}=& \arg\min_{{\pi}\in \Pi} \left(\mathcal{C}_k+\gamma \mathbb{E}_{\tilde{x}_{k+1}}\left[V_{{\pi}}(\tilde{x}_{k+1}) \right.\right.\nonumber \\
		&\big.\left.-\alpha\mathcal{H}\left({\pi}\left(a_{k+1}|\tilde{x}_{k+1}\right)\right)\right]\big),\label{eq:LAC_Obj}
		\end{align}
		where $\Pi$ is the policy set.
		
		An optimal policy $\pi^{\ast}\left(a|\tilde{x}\right)=\mathcal{N}\left(a^{\ast}\left(\tilde{x}\right), \sigma^{\ast}\right)$ can be obtained by solving \eqref{eq:LAC_Obj}. Here, $\sigma^{\ast}$ is close to $0$, which further implies the optimal policy $a^{\ast}$ converges to a deterministic mean value. Once such a policy is obtained, it can be deployed to the target system. For $a^{\ast}$, it can be parameterised as a DNN and learned using stochastic gradient descent algorithms.

		Training/learning process will repeatedly execute policy evaluation and policy improvement. In the policy evaluation, the Q-value in (\ref{eq:Q_SAC}) is computed by applying a Bellman operation $Q_{{\pi}}\left({x}_{k},a_{l, k}\right)=\mathcal{T}^{{\pi}}Q_{{\pi}}\left({x}_{k},a_{l, k}\right)$ where
		\begin{align}
		\mathcal{T}^{{\pi}}Q_{{\pi}}\left({x}_{k},a_{l, k}\right)&=\mathcal{C}_k+\gamma \mathbb{E}_{{x}_{k+1}}\left[\mathbb{E}_{\pi}\left[Q_{\pi}\left({x}_{k+1},a_{k+1}\right)\right]\right], \label{eq:BellmanOp}
		\end{align}
		where $Q_{{\pi}}\left({x}_{k},a_{l,k}\right)=Q_{{\pi}}\left({x}_{k},a_{k}\right) +\alpha\ln\left({\pi}\left(a_{k}|{x}_{k}\right)\right)$.
		
		In the policy improvement, the policy is updated by
		\begin{equation}
		{\pi}_{\text{new}} = \arg \min_{{\pi}'\in \Pi}\mathscr{D}_{\text{KL}}\left({\pi}'\left(\cdot\vert{x}_{k}\right) \Big\Vert \frac{{e^{\frac{-1}{\alpha}Q^{{{\pi}}_{\text{old}}}\left({x}_{k}, \cdot\right)}}}{{Z^{{{\pi}}_{\text{old}}}}}\right), \label{eq:KL_pi}
		\end{equation}
		where ${\pi}_{\text{old}}$ denotes the policy from the last update, $Q^{{{\pi}}_{\text{old}}}$ is the Q-value of ${\pi}_{\text{old}}$, $\mathscr{D}_{\text{KL}}$ denotes the Kullback-Leibler (KL) divergence, and $Z^{{\pi}_{\text{old}}}$ is a normalisation factor. The objective can be transformed into
		\begin{equation}
		{\pi}^* = \arg \min_{{\pi}\in \Pi} \mathbb{E}_{{{\pi}}}\Big[\alpha\ln\left({\pi}\left(a_{k}|{x}_{k}\right)\right)+Q\left({x}_{k}, a_{k}\right)\Big]. \label{eq:PI_Q}
		\end{equation}
		More details can be found in \cite{Haarnoja2018SAC1}. }

	\subsection{Data-based Stability Analysis}\label{sec:DSA}
	
	The convergence of bounded estimate error $\tilde{x}_k$ is essentially equivalent to show the stability of \eqref{MDP}. In this paper, we are interested in establishing the stability theorem by only using samples $\{\tilde{x}_{k+1}, \tilde{x}_{k}, a_k\}, ~\forall k$. The most useful and general approach for studying a dynamical system's stability is the Lyapunov method \cite{lyapunov1892general,jiang2012computational,lewis2012optimal}. In Lyapunov method, a suitable ``energy-like'' Lyapunov function $\mathcal{L}(\tilde{x}_{k})$ (for succinctness $\mathcal{L}(k)$ is used in the following context) is selected. Its derivative along the system trajectories is ensured to be negative semi-definite, i.e., $\mathcal{L}(k+1) - \mathcal{L}(k)\leq 0$ for all time instants and states, so that the state goes in the direction of decreasing the value of Lyapunov function and eventually converges to the origin or a sub-level set of the Lyapunov function. In this subsection, such a function is introduced to analyse the stability of the estimate error dynamical system \eqref{MDP}. Unlike the model-based stability analysis literature, we will learn a Lyapunov function instead of an explicit expression regardless of the degree of nonlinearity of the system and the time-consuming human expert design. {The Lyapunov candidate $\mathcal{L}(k)$ can be chosen as the Q-function $Q_{\pi}(\tilde{x}_{k},a_{k})$ \cite{perkins2002lyapunov,petridis2006construction,richards2018lyapunov, berkenkamp2017safe}. It is well-known that the Q-function $Q_{\pi}(\tilde{x}_{k},a_{k})$ is related to the cost $\mathcal{C}_{k}$.
		Thus, there always exist a positive function
		\begin{equation}
		\begin{aligned}
		\mathcal{L}(k) = \mathcal{C}_{k} + \delta_{k} = Q_{\pi}(\tilde{x}_{k},a_{k})
		\label{c+delta}
		\end{aligned}
		\end{equation}
		where $Q_{\pi}(\tilde{x}_{k},a_{k})$ can be parameterised by a DNN. Here, $\delta_{k} > 0$ should be satisfied due to the property of the Q-function, and it should be also non-increasing. Once the trace of the covariance of estimate error converges, the cost $\mathcal{C}_{k}$ and Q-function both converge as well, as a result $\delta_{k}$ will converge to a constant number.} We first prove the stability theorem by exploiting the function $\mathcal{L}(k)$ and the characteristic of $\delta_{k}$. The details of the learning algorithm will be discussed in Section \ref{sec:algorithm}. { We need the following assumption and lemma for the proof.
		\begin{assumption}
			\label{assumption:ergodicity}
			A Markov chain induced by a policy $\pi$ is ergodic with a unique distribution probability $q_{\pi}(\tilde{x}_{k})$ with $q_{\pi}(\tilde{x}_{k}) = \lim _{k \rightarrow \infty} \mathcal{P}(\tilde{x}_{k} \mid \rho, \pi, k)$, where $\rho$ denotes the distribution of starting states.
		\end{assumption}
	}
	{
		\begin{remark}
			The verification of ergodicity is, in general, an open question and also in the long history of ergodic theory. Many systems have proved to be ergodic in physics, statistic mechanics, economics \cite{moore2015ergodic, peters2019ergodicity}. The study of ergodicity of various systems and its verification composes a significant branch of mathematics. If the transition probability is known for all states, the validation is possible but requires an extensive resource of computation power to enumerate through the state space. The existence of the stationary state distribution is generally assumed to hold in the RL literature \cite{sutton2009convergent,bhandari2018finite, meyn2012markov}. In this paper, based on the ergodicity assumption, we focus on analysing the stability of such systems and developing an algorithm to find the filter gain. 
		\end{remark}
	}
	{Before proving the main theorem, we also need the following Lemma}
	{
		\begin{lemma}\label{lemma1} \cite{royden1968real}
			(Lebesgue's Dominated convergence theorem) Suppose $f_n:\mathcal{R}\rightarrow [-\infty,+\infty]$ (Lebesgue) measurable functions such that the point-wise limit $f(x)=\lim_{n\rightarrow\infty}f(x)$ exists. Assume there is an integrable function $g:\mathcal{R}\rightarrow [-\infty,+\infty]$ with $|f_n(x)|\leq g(x)$ for each $x\in \mathcal{R}$, then $f$ is integrable (in the Lebesgue sense) and $$\lim_{n\rightarrow\infty}\int_{\mathcal{R}}f_n\mathrm{d}\mu=\int_{\mathcal{R}}\lim_{n\rightarrow\infty}f_n\mathrm{d}\mu=\int_{\mathcal{R}}f\mathrm{d}\mu.$$
		\end{lemma}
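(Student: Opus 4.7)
The plan is to prove Lebesgue's Dominated Convergence Theorem via Fatou's Lemma applied to two auxiliary non-negative sequences. First, I would verify integrability of the limit $f$: since $|f_n(x)|\leq g(x)$ for every $n$ and $x$, passing to the pointwise limit gives $|f(x)|\leq g(x)$, and because $g$ is integrable, $f$ is integrable as well and both $\int f_n\,d\mu$ and $\int f\,d\mu$ are finite-valued.

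Next, I would introduce the two non-negative sequences $g+f_n\geq 0$ and $g-f_n\geq 0$, both of which are legitimate because of the domination hypothesis. Applying Fatou's Lemma to $g+f_n$, together with pointwise convergence $f_n\to f$, yields
\begin{equation*}
\int_{\mathcal{R}}(g+f)\,d\mu \;\leq\; \liminf_{n\to\infty}\int_{\mathcal{R}}(g+f_n)\,d\mu \;=\; \int_{\mathcal{R}}g\,d\mu+\liminf_{n\to\infty}\int_{\mathcal{R}}f_n\,d\mu,
\end{equation*}
and subtracting the finite quantity $\int g\,d\mu$ gives $\int f\,d\mu\leq\liminf_{n\to\infty}\int f_n\,d\mu$. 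Applying Fatou's Lemma to $g-f_n$ in the same way yields $\int f\,d\mu\geq\limsup_{n\to\infty}\int f_n\,d\mu$. Squeezing $\liminf$ and $\limsup$ together produces the desired equality $\lim_{n\to\infty}\int_{\mathcal{R}} f_n\,d\mu=\int_{\mathcal{R}} f\,d\mu$, which can then be swapped with the pointwise limit as stated.

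The only genuine obstacle is that the argument relies on Fatou's Lemma, which itself is proved from the Monotone Convergence Theorem for non-negative measurable functions; making the write-up self-contained would require reproducing that chain. A minor technical point is handling points where $g(x)=+\infty$ or where $f_n(x)$ takes extended real values, but since $\int g\,d\mu<\infty$ forces $g<\infty$ almost everywhere, these issues can be absorbed into a null set and do not affect the integrals. Since the lemma is quoted from Royden as a classical tool, the proposal here is essentially to point to this standard Fatou-based argument rather than to carry out a novel derivation.
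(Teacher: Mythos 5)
Your Fatou-based argument (applying Fatou's lemma to the non-negative sequences $g+f_n$ and $g-f_n$ and squeezing the $\liminf$ and $\limsup$ of $\int f_n\,d\mu$ against $\int f\,d\mu$) is correct and complete, including the observation that $g<\infty$ almost everywhere lets you dispose of the extended-value technicalities. The paper offers no proof of this lemma at all --- it is quoted verbatim as a classical result with a citation to Royden --- and the proof you give is precisely the standard one found in that reference, so there is nothing to compare beyond noting agreement.
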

		
		{Based on Assumption~\ref{assumption:ergodicity} and Lemma~\ref{lemma1}, } we prove our main theorem in the following.
		\begin{theorem}\label{theo2}
			If there exist a function $\mathcal{L}(k) \geq 0$, and constants $\alpha_{1}\geq0$, $\alpha_{2}>0$, and $\beta \geq 0$ such that the following inequalities hold
			\begin{equation}
			\begin{aligned}
			&\alpha_{1}\mathcal{C}_{k} \leq \mathcal{L}(k) \leq \alpha_{2}\mathcal{C}_{k}, \\
			&\mathbb{E}_{\tilde{x}_{k} \sim \mu_{\pi}}\left[\mathbb{E}_{\tilde{x}_{k+1} \sim \mathcal{P}_{\pi}} \left[\mathcal{L}(k+1) \right]-\mathcal{L}(k)\right] \\
			&\leq - \beta \mathbb{E}_{\tilde{x}_{k} \sim \mu_{\pi}} \left\{\|\tilde{x}_{k}\|^{2} \right\} + \delta_{k}, \ \forall k \in \mathcal{Z}_{+}.
			\label{LyaCons}
			\end{aligned}
			\end{equation}
			Then the estimate error $\tilde{x}_{k}$ is guaranteed to be exponentially bounded in mean square, i.e.,
			\begin{equation}
			\begin{aligned}
			\mathbb{E}_{\tilde{x}_{k} \sim \mu_{\pi}}\left[\left\|\tilde{x}_{k} \right\|^{2}\right] \leq \sigma^{k}\mathbb{E}_{\tilde{x}(0) \sim \mu_{\pi}}\left[\left\|\tilde{x}_{0} \right\|^{2}\right]+ p,
			\label{index}
			\end{aligned}
			\end{equation}
			where $$\mu_{\pi}(\tilde{x}_{k}) \triangleq \lim_{N \rightarrow \infty} \frac{1}{N} \sum_{k=0}^{N} \mathcal{P}\left(\tilde{x}_{k} \mid \rho, \pi, k\right)$$ is the estimate error distribution, and $p=\sum^{k-1}_{\iota=0}\sigma^{k-\iota-1}\delta_{\iota}$, $\sigma \in (0,1)$.
		\end{theorem}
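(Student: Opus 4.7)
The plan is to turn the drift inequality in the second line of \eqref{LyaCons} into a scalar recursion for the expected Lyapunov value $E_k \triangleq \mathbb{E}_{\tilde{x}_{k} \sim \mu_{\pi}}[\mathcal{L}(k)]$, iterate that recursion, and then translate the bound back into one on $X_k \triangleq \mathbb{E}_{\tilde{x}_{k} \sim \mu_{\pi}}[\|\tilde{x}_{k}\|^{2}]$ using the sandwich bound $\alpha_{1}\mathcal{C}_{k} \leq \mathcal{L}(k) \leq \alpha_{2}\mathcal{C}_{k}$, together with the quadratic cost $\mathcal{C}_{k}=\|\tilde{x}_{k}\|^{2}$ fixed in Section~\ref{subsec:RL}.

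First, the drift inequality reads $E_{k+1}-E_{k}\leq -\beta X_{k}+\delta_{k}$. Combining with the upper sandwich bound $E_{k}\leq \alpha_{2} X_{k}$, i.e.\ $X_{k}\geq E_{k}/\alpha_{2}$, I would obtain the one-step contraction $E_{k+1}\leq (1-\beta/\alpha_{2})\,E_{k}+\delta_{k}$. Setting $\sigma \triangleq 1-\beta/\alpha_{2}$, which lies in $(0,1)$ provided $0<\beta<\alpha_{2}$ (an implicit assumption absorbed into the statement $\sigma\in(0,1)$), a straightforward induction on $k$ yields the closed form
\begin{equation*}
E_{k}\leq \sigma^{k}E_{0} + \sum_{\iota=0}^{k-1}\sigma^{k-\iota-1}\delta_{\iota}.
\end{equation*}
Applying the sandwich bound in the reverse direction, the lower bound $\alpha_{1}X_{k}\leq E_{k}$ together with $E_{0}\leq \alpha_{2}X_{0}$ converts this into $X_{k}\leq (\alpha_{2}/\alpha_{1})\sigma^{k}X_{0}+(1/\alpha_{1})\sum_{\iota=0}^{k-1}\sigma^{k-\iota-1}\delta_{\iota}$, which matches \eqref{index} with $p$ exactly as stated, after absorbing the constant prefactor into the defining constants (or normalising $\alpha_{1}=1$).

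The main obstacle is the treatment of the averaging distribution $\mu_{\pi}$. Because $\mu_{\pi}(\tilde{x}_{k})$ is itself defined as a Ces\`aro-type limit of the time-varying marginals $\mathcal{P}(\tilde{x}_{k}\mid \rho,\pi,k)$, writing $E_{k+1}=\mathbb{E}_{\tilde{x}_{k}\sim\mu_\pi}[\mathbb{E}_{\tilde{x}_{k+1}\sim\mathcal{P}_\pi}[\mathcal{L}(k+1)]]$ and iterating requires exchanging that infinite-horizon limit with the expectation of both $\mathcal{L}(k)$ and $\|\tilde{x}_{k}\|^{2}$. This is precisely where Assumption~\ref{assumption:ergodicity} and Lemma~\ref{lemma1} enter: ergodicity ensures the limit defining $\mu_{\pi}$ exists and is invariant under the transition kernel, and Lebesgue's dominated convergence theorem legitimises moving the limit inside the expectation, provided we can exhibit an integrable dominating function for the sequence $\mathcal{L}(k)$.

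The delicate part is exhibiting such a dominating function. I would argue for it in a self-consistent way: the upper sandwich bound $\mathcal{L}(k)\leq \alpha_{2}\|\tilde{x}_{k}\|^{2}$ reduces domination of $\mathcal{L}(k)$ to a uniform integrability statement on $\|\tilde{x}_{k}\|^{2}$, which in turn follows once $\delta_{k}$ is non-increasing (as highlighted after \eqref{c+delta}) so that $p=\sum_{\iota=0}^{k-1}\sigma^{k-\iota-1}\delta_{\iota}$ stays finite; combined with the recursion already derived, this gives a uniform-in-$k$ bound on $X_k$, which supplies the dominating function needed to close the circular dependence. Once this is in place, passing to the limit justifies replacing the time-$k$ marginal by $\mu_{\pi}$ throughout the induction, and \eqref{index} follows.
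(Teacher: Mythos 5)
Your proposal is correct and follows essentially the same route as the paper's proof: the one-step contraction $E_{k+1}\leq\sigma E_{k}+\delta_{k}$ with $\sigma=1-\beta/\alpha_{2}$ is algebraically identical to the paper's weighted telescoping of $\tfrac{1}{\sigma^{\iota+1}}\mathbb{E}[\mathcal{L}(\iota+1)]-\tfrac{1}{\sigma^{\iota}}\mathbb{E}[\mathcal{L}(\iota)]\leq\delta_{\iota}/\sigma^{\iota+1}$, and your use of ergodicity plus dominated convergence (with $\alpha_{2}\mathcal{C}_{k}$ as the dominating function) to justify exchanging the Ces\`aro limit with the expectation is exactly the role Assumption~\ref{assumption:ergodicity} and Lemma~\ref{lemma1} play in the paper. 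Your explicit handling of the $\alpha_{2}/\alpha_{1}$ prefactor (requiring $\alpha_{1}>0$) is in fact more careful than the paper's final step, which absorbs it silently.
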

		
		\begin{proof}
			The existence of the sampling distribution $\mu_{\pi}(\tilde{x}_{k})$ is guaranteed by the existence of $q_{\pi}(\tilde{x}_{k}).$ Since the sequence $\left\{\mathcal{P}(\tilde{x}_{k} | \rho, \pi, k), k \in \mathbb{Z}_{+}\right\}$ converges to $q_{\pi}(\tilde{x}_{k})$ as $k$ approaches $\infty$, then by the Abelian theorem that if a sequence or function behaves regularly, then some average of it behaves regularly. We have the sequence $\left\{\frac{1}{N} \sum_{k=0}^{N} P(\tilde{x}_{k} \mid \rho, \pi, k), N \in \mathbb{Z}_{+}\right\}$ also converges and $\mu_{\pi}(\tilde{x}_{k})=q_{\pi}(\tilde{x}_{k})$. Then, \eqref{LyaCons} can be rewritten as
			\begin{equation}
			\begin{aligned}
			&\int_{\mathcal{S}} \lim _{N \rightarrow \infty} \frac{1}{N} \sum_{k=0}^{N} P(\tilde{x}_{k} | \rho, \pi, k)  \left(\mathbb{E}_{P_{\pi}\left(\tilde{x}_{k+1} \mid s\right)} \mathcal{L}(k+1)-\mathcal{L}(k)\right) \mathrm{d} \tilde{x}_{k} \\
			&\leq - \beta \mathbb{E}_{\tilde{x}_{k} \sim \mu_{\pi}} \|\tilde{x}_{k}\|^{2} + \delta_{k}
			\nonumber
			\end{aligned}
			\end{equation}
			
			The left-hand-side of the above inequality can be derived as follows:
			\small
			\begin{equation}
			\begin{aligned}
			&\int_{\mathcal{S}} \lim _{N \rightarrow \infty} \frac{1}{N} \sum_{k=0}^{N} P(\tilde{x}_{k} \mid \rho, \pi, k) \left({\int_{\mathcal{S}} P_{\pi}\left(\tilde{x}_{k+1} | \tilde{x}_{k}\right) \mathcal{L}(k+1) \mathrm{d} \tilde{x}_{k+1} - \mathcal{L}(k)}\right)
			\mathrm{d} \tilde{x}_{k}  \\
			&= \int_{\mathcal{S}} \lim_{N \rightarrow \infty} \frac{1}{N} \sum_{k=0}^{N} P(\tilde{x}_{k} \mid \rho, \pi, k)\int_{\mathcal{S}} P_{\pi}\left(\tilde{x}_{k+1}| \tilde{x}_{k}\right) 
			\mathcal{L}(k+1) \mathrm{d}
			\tilde{x}_{k+1}\mathrm{d} \tilde{x}_{k}  \\
			& \ \ \ \ \  - \int_{\mathcal{S}} \lim_{N \rightarrow \infty} \frac{1}{N} \sum_{k=0}^{N} P(\tilde{x}_{k} \mid \rho, \pi, k)\mathcal{L}(k)
			\mathrm{d} \tilde{x}_{k}  \\
			&= \int_{\mathcal{S}} \lim_{N \rightarrow \infty} \frac{1}{N} \int_{\mathcal{S}}\sum_{k=0}^{N} P(\tilde{x}_{k} \mid \rho, \pi, k) P_{\pi}\left(\tilde{x}_{k+1} | \tilde{x}_{k}\right) \mathcal{L}(k+1) \mathrm{d} \tilde{x}_{k}\mathrm{d} \tilde{x}_{k+1} \\
			& \ \ \ \ \  - \int_{\mathcal{S}} \lim _{N \rightarrow \infty} \frac{1}{N} \sum_{k=0}^{N} P(\tilde{x}_{k} \mid \rho, \pi, k)\mathcal{L}(k)
			\mathrm{d} \tilde{x}_{k}  \\
			&= \int_{\mathcal{S}} \lim _{N \rightarrow \infty} \frac{1}{N}\mathcal{L}(k+1) \mathrm{d} \tilde{x}_{k+1} \int_{\mathcal{S}}\sum_{k=0}^{N} P(\tilde{x}_{k} \mid \rho, \pi, k) P_{\pi}\left(\tilde{x}_{k+1} | \tilde{x}_{k}\right) 
			\mathrm{d} \tilde{x}_{k} \\
			& \ \ \ \ \  - \int_{\mathcal{S}} \lim _{N \rightarrow \infty} \frac{1}{N} \sum_{k=0}^{N} P(\tilde{x}_{k} \mid \rho, \pi, k)\mathcal{L}(k)
			\mathrm{d} \tilde{x}_{k}  \\
			&=\int_{\mathcal{S}} \lim _{N \rightarrow \infty} \frac{1}{N} \int_{\mathcal{S}}\sum_{k=0}^{N} P(\tilde{x}_{k+1} \mid \rho, \pi, k+1)\mathcal{L}(k+1)  
			\mathrm{d} \tilde{x}_{k+1} \\
			& \ \ \ \ \  - \int_{\mathcal{S}} \lim _{N \rightarrow \infty} \frac{1}{N} \sum_{k=0}^{N} P(\tilde{x}_{k} \mid \rho, \pi, k)\mathcal{L}(k)
			\mathrm{d} \tilde{x}_{k}
			\label{LHSInq}
			\end{aligned}
			\end{equation}
			\normalsize
			Since $\mathcal{L}(k) \leq \alpha_2\mathcal{C}_k$ and $P(\tilde{x}_{k} | \rho, \pi, k)\leq 1$, then $P(\tilde{x}_{k} | \rho , \pi, k) \mathcal{L}(k)\leq \alpha_2\mathcal{C}_k$. Besides, the sequence $\left\{\frac{1}{N} \sum_{k=0}^{N} \mathcal{P}(\tilde{x}_{k} | \rho, \pi, k) \mathcal{L}(k)\right\}$ converges point-wise to the function $q_{\pi}(\tilde{x}_{k}) \mathcal{L}(k)$. 
			According to the Lebesgue's dominated convergence theorem (Lemma \ref{lemma1}), it follows from \eqref{LHSInq} that :
			\small
			\begin{equation}
			\begin{aligned}
			&\int_{\mathcal{S}} \lim _{N \rightarrow \infty} \frac{1}{N} \sum_{k=0}^{N} P(\tilde{x}_{k} \mid \rho, \pi, k) \left({\int_{\mathcal{S}} P_{\pi}\left(\tilde{x}_{k+1} | \tilde{x}_{k}\right) \mathcal{L}(k+1) \mathrm{d} \tilde{x}_{k+1} - \mathcal{L}(k)}\right)
			\mathrm{d} \tilde{x}_{k}  \\
			&=\int_{\mathcal{S}} \lim _{N \rightarrow \infty} \frac{1}{N} \int_{\mathcal{S}}\sum_{k=0}^{N} P(\tilde{x}_{k+1} \mid \rho, \pi, k+1)\mathcal{L}(k+1)  
			\mathrm{d} \tilde{x}_{k+1} \\
			& \ \ \ \ \  - \int_{\mathcal{S}} \lim _{N \rightarrow \infty} \frac{1}{N} \sum_{k=0}^{N} P(\tilde{x}_{k} \mid \rho, \pi, k)\mathcal{L}(k)
			\mathrm{d} \tilde{x}_{k}  \\
			&= \lim _{N \rightarrow \infty} \frac{1}{N}\left(\sum_{k=1}^{N+1} \mathbb{E}_{P(\tilde{x}_{k} | \rho, \pi, k)} 
			\mathcal{L}(k) -\sum_{k=0}^{N} \mathbb{E}_{P(\tilde{x}_{k} \mid \rho, \pi, k)} \mathcal{L}(k)\right)  \\
			&= \lim _{N \rightarrow \infty} \frac{1}{N}\sum_{k=0}^{N} \left(\mathbb{E}_{P(\tilde{x}_{k+1} | \rho, \pi, k+1)} \mathcal{L}(k+1) - \mathbb{E}_{P(\tilde{x}_{k} \mid \rho, \pi, k)} \mathcal{L}(k)\right).
			\label{ExpInq}
			\end{aligned}
			\end{equation}
			\normalsize
			It further follows from \eqref{LyaCons} and \eqref{ExpInq} that 
			\small
			\begin{equation}
			\begin{aligned}
			&\mathbb{E}_{\mathcal{P}(\tilde{x}_{k+1} | \rho, \pi, k+1)} \mathcal{L}(k+1)- \mathbb{E}_{\mathcal{P}(\tilde{x}_{k} \mid \rho, \pi, k)} \mathcal{L}(k)\\
			\leq & - \beta \mathbb{E}_{\tilde{x}_{k} \sim \mu_{\pi}} \left\{\|\tilde{x}_{k}\|^{2} \right\} + \delta_{k} \label{Ineqnew}
			\end{aligned}
			\end{equation}
			\normalsize
			Given $\alpha_2$ and $\beta$, there always exists a scalar $\sigma$ such that the following equation holds
			\begin{equation}
			\begin{aligned}
			\left(\frac{1}{\sigma}-1\right)\alpha_{2} -\frac{\beta}{\sigma} =0.
			\label{conseqn}
			\end{aligned}
			\end{equation}
			
			
			Using \eqref{Ineqnew} and \eqref{conseqn}, the following inequality can be derived
			\begin{equation*}
			\begin{split}
			&\frac{1}{\sigma^{\iota+1}}\mathbb{E}_{\mathcal{P}(\tilde{x}_{\iota+1} \mid \rho, \pi, \iota+1)} \mathcal{L}(\iota+1) - \frac{1}{\sigma^{\iota}}\mathbb{E}_{\mathcal{P}(\tilde{x}_{\iota} \mid \rho, \pi, \iota)} \mathcal{L}(\iota) \\
			&= \frac{1}{\sigma^{\iota+1}} \left(\mathbb{E}_{\mathcal{P}(\tilde{x}_{\iota+1} | \rho, \pi, \iota+1)} \mathcal{L}(\iota+1)- \mathbb{E}_{\mathcal{P}(\tilde{x}_{\iota} \mid \rho, \pi, \iota)} \mathcal{L}(\iota)\right) \\
			& + \frac{1}{\sigma^{\iota}}\left(\frac{1}{\sigma} - 1 \right)\mathbb{E}_{\mathcal{P}(\tilde{x}_{\iota} \mid \rho, \pi, \iota)} \mathcal{L}(\iota) \\
			&\leq \frac{1}{\sigma^{\iota}} \left(-\frac{\beta}{\sigma} + (\frac{1}{\sigma} - 1)\alpha_{2} \right)\mathbb{E}_{\tilde{x}_{k} \sim \mu_{\pi}} \left\{\|\tilde{x}_{k}\|^{2} \right\} \\
			& + \frac{\delta_{\iota}}{\sigma^{\iota+1}}, \quad \forall \iota\geq 0,
			\end{split}
			\end{equation*}
			which implies
			\begin{equation}
			\begin{aligned}
			&\frac{1}{\sigma^{\iota+1}}\mathbb{E}_{\mathcal{P}(\tilde{x}_{\iota+1} \mid \rho, \pi, \iota+1)} \mathcal{L}(\iota+1) - \frac{1}{\sigma^{\iota}}\mathbb{E}_{\mathcal{P}(\tilde{x}_{\iota} \mid \rho, \pi, \iota)} \mathcal{L}(\iota) \\
			&\leq \frac{\delta_{\iota}}{\sigma^{\iota+1}}.
			\label{*}
			\end{aligned}
			\end{equation}
			
			To sum the above inequality from $\iota=0,1,\ldots,k-1$ yields
			\begin{equation*}
			\begin{split}
			&\frac{1}{\sigma^{k}}\mathbb{E}_{\mathcal{P}(\tilde{x}_{k} \mid \rho, \pi, k)} \mathcal{L}(k) - \mathbb{E}_{\mathcal{P}(\tilde{x}_{0} \mid \rho, \pi, 0)} \mathcal{L}(0) \\
			& \leq \sum^{k-1}_{\iota=0} \frac{\delta_{\iota}}{\sigma^{\iota+1}},
			\end{split}
			\end{equation*}
			which implies
			\begin{equation*}
			\begin{aligned}
			\mathbb{E}_{\tilde{x}_{k} \sim \mu_{\pi}}\left[\left\|\tilde{x}_{k} \right\|^{2}\right] &\leq \sigma^{k}\mathbb{E}_{\tilde{x}_{0} \sim \mu_{\pi}}\left[\left\|\tilde{x}_{0} \right\|^{2}\right]+ p .
			\end{aligned}
			\end{equation*}
			where the scalar constant $p$ is a upper bound of the sequence $\{\sum_{t=0}^{k-1}\sigma^{k-t-1}\delta_t,k\geq 0\}$. Since $\sigma \in(0,~1)$ and $\delta_{k}$ is non-increasing, we can conclude that the upper bound $p$ exists. Thus,  the estimate error is exponentially bounded in  mean square, which completes the proof.
	\end{proof}}

	\section{Reinforcement Learning Filter Design}\label{sec:algorithm}
	
	In this section, we will discuss the reinforcement learning algorithm design and implementation. We will design the algorithm based on the actor-critic RL algorithms which are widely used in continuous control tasks \cite{sutton2018reinforcement}.
	In actor-critic RL, typically DNNs are used to approximate the ``critic'' and the ``actor''. In the following, we will introduce how to incorporate convergence conditions derived in Subsection \ref{sec:DSA}  into such algorithm architectures. Then the convergence of the algorithm will be analysed.
	
	{\subsection{Deep neural networks approximation} \label{subsec:DNN}
		
		DNNs approximation are used due to the continuous state and action spaces in this paper. The DNNs are constructed by fully connected multiple layer perceptrons (MLP), in which the rectified linear unit (ReLU) nonlinearities are chosen as the activation functions \cite{Dahl2013ICASSP}. The ReLU nonlinearities are defined as $  \rho\left(z\right) = \max\left\{z\text{, }0\right\}$ when $z$ is a scalar. Given a vector $z=[z_1\text{,}\ldots,\text{,}z_n]^{\top}\in\mathbb{R}^{n}$, then $\rho\left(z\right)=[\rho\left(z_1\right)\text{,}\ldots\text{,}\rho\left(z_n\right)]^{\top}$. An example of a MLP with two hidden layers is described as
		\begin{equation}
		\underline{\text{MLP}}_{\mathrm{w}}^{2}\left(z\right) =\mathrm{w}_2\Big[ \rho\Big(\mathrm{w}_1 \Big[\rho\Big(\mathrm{w}_0\left[\begin{array}{c}
		z \\
		1
		\end{array}\right]\Big)\text{,}1\Big]^{\top} \Big)^{\top}\text{,} 1\Big]^{\top}, \label{eq:MLP}
		\end{equation}
		where $\left[z^{\top}\text{, }1\right]^{\top}$ is a vector composed of $z$ and a constant bias $1$, the superscript ``$2$'' denotes the total number of hidden layers, the subscript ``$\mathrm{w}$'' denotes the parameter set to be trained in a MLP with $\mathrm{w}=\left\{\mathrm{w}_0\text{, }\mathrm{w}_1\text{, }\mathrm{w}_2\right\}$, and $\mathrm{w}_0$, $\mathrm{w}_1$,  and $\mathrm{w}_2$ are weight matrices with appropriate dimensions.

		If there is a set of inputs $z=\left\{z_1\text{, } \ldots\text{, } z_L\right\}$ for the MLP in  (\ref{eq:MLP}) with $z_1$, $\ldots$, $z_L$ denoting vector signals, we have
		\begin{equation}
		\underline{\text{MLP}}_{\mathrm{w}}^{2}\left(z\right) =\underline{\text{MLP}}_{\mathrm{w}}^{2}\big(\left[z_1^{\top}\text{, }  \ldots\text{, } z_L^{\top}\right]^{\top}\big). \label{eq:MLP_2}
		\end{equation}
		Besides, $\underline{\text{MLP}}_{\mathrm{w}}^{2}\left(z_1\text{, } z_2\right)=\underline{\text{MLP}}_{\mathrm{w}}^{2}\big(\left[z_1^{\top}\text{, } z_2^{\top}\right]^{\top}\big)$ for two vector inputs $z_1$ and $z_2$. If $z_1=\left\{z_{11}\text{, } \ldots\text{, } z_{1L}\right\}$ is a set of vectors, $\underline{\text{MLP}}_{\mathrm{w}}^{2}\left(z_1\text{, } z_2\right)=\underline{\text{MLP}}_{\mathrm{w}}^{2}\big(\left[z_{11}^{\top}\text{, }\ldots\text{, } z_{1L}^{\top}, z_2^{\top}\right]^{\top}\big)$.

		In this paper, the constructed MLPs are used to approximate the ``critic'' $Q_{\pi}(\tilde{x}_{k},a_{k})$ and the ``actor'' $\pi(a_{k}|\tilde{x}_{k})$. We respectively use $\theta$ and $\phi$ to parameterise $Q(\tilde{x}_{k},a_{k})$ and $\pi(a_{k}|\tilde{x}_{k})$, i.e., $Q_{\theta}(\tilde{x}_{k},a_{k})$ and $\pi_{\phi}(a_{k}|\tilde{x}_{k})$. As discussed in Section \ref{sec:DSA}, the Q-function $Q(\tilde{x}_{k},a_{k})$ is regarded as a Lyapunov candidate $\mathcal{L}(k)$ in our paper. Namely, the ``critic'' is the Lyapunov function. In the following context, we replace $Q(\tilde{x}_{k},a_{k})$ with $\mathcal{L}(k)$, and  $Q_{\theta}(\tilde{x}_{k},a_{k})$ with $\mathcal{L}_{\theta}(k)$. The direct output of the constructed MLP may not satisfy the requirements of a Lyapunov function $\mathcal{L}(k)$, for example, $\mathcal{L}(k)>0,~ \forall k\geq 0$, so it is necessary to modify the representation of MLP. Following (\ref{eq:MLP}) and (\ref{eq:MLP_2}), the Lyapunov function approximation $\mathcal{L}_{\theta}(k)$ is chosen as
		\begin{equation}
		\mathcal{L}_{\theta}(k) =\left(\underline{\text{MLP}}_{\theta}^{{K}_{1}}\left(\tilde{x}_{k}\right)\right)^{2}, \label{eq:MLP_Q}
		\end{equation}
		where $\theta =\left\{\theta_0, \ldots, \theta_{{K}_{1}}\right\}$ with $\theta_i$  denoting the weight matrices of proper dimensions, $0\leq i\leq K_1$.  The DNN for $\mathcal{L}_{\theta}$ is illustrated in Fig. \ref{fig:ACNN}.
		
		The filter gain $a_{k}$ is also approximated using a MLP. The approximated filter gain of $a_{k}$ with a parameter set $\phi$ is
		\begin{equation}
		a_{\phi}=\underline{\text{MLP}}_{\phi}^{{K}_{2}}\left(\tilde{x}_{k}\right). \label{eq:MLP_u}
		\end{equation}
		The illustration of $a_{\phi}$ is given in Figure \ref{fig:ACNN}. In the learning setup, there are two outputs for the MLP in \eqref{eq:MLP_u}. One is the control law $a_{\phi}$, the other one is ${\sigma}_{\phi}$ that is the standard deviation (SD) of the exploration noise \cite{Haarnoja2018SAC1}.  According to (\ref{eq:Policy}), the parameterised policy $\pi_{{\phi}}$ in our learning is
		\begin{equation}
		\pi_{{\phi}} =  \mathcal{N}\left(a_{\phi}\left(\tilde{x}\right), \sigma_{\phi}^2\right).    \label{eq:MLP_Pi}
		\end{equation}
		
		\begin{figure*}[tb]
			\centering
			\includegraphics[width=0.88\textwidth]{./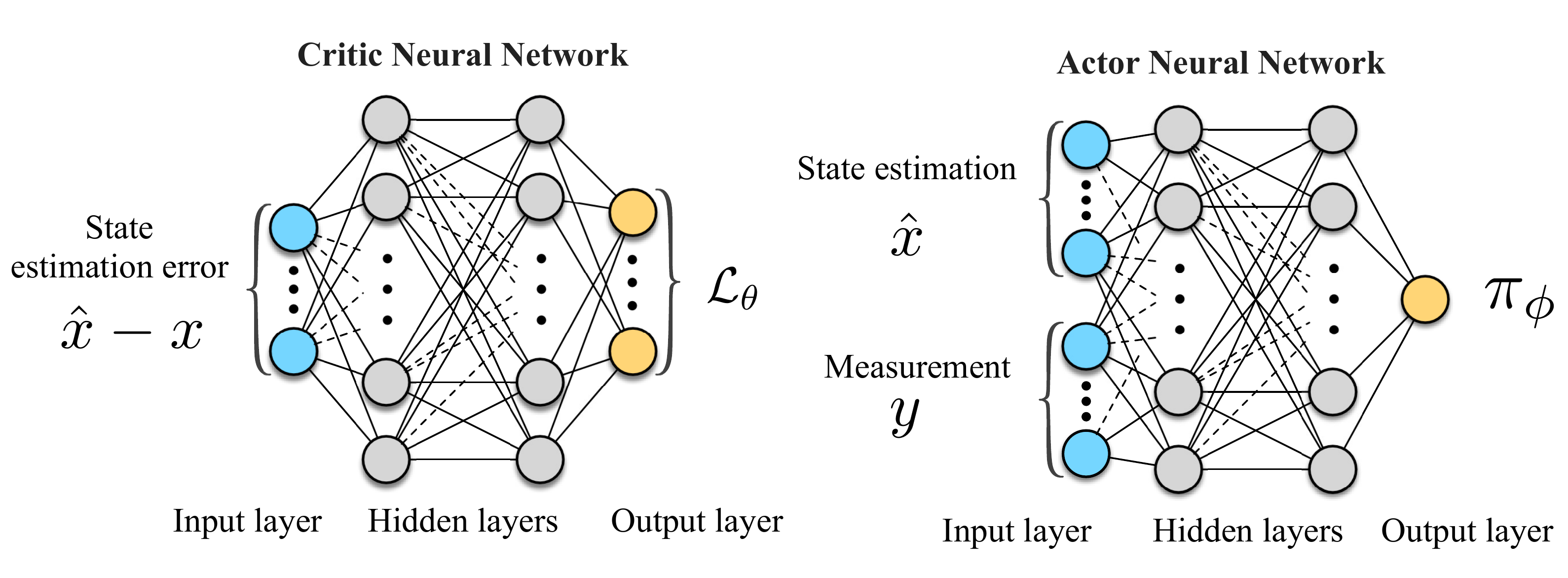}
			\caption{Approximation of $\mathcal{L}_{\theta}$ and $\pi_{\phi}$ using MLP}
			\label{fig:ACNN}
			\vspace{-0.4cm}
	\end{figure*}}
	
	\subsection{Lyapunov reinforcement learning filter (LRLF)}\label{subsec:Training}
	
	The actor-critic RL training process is depicted in Fig.~\ref{fig:TrainRL}. In the training process, the dynamic system \eqref{equ01} and the state estimator to be trained \eqref{equ02} repeatedly run to collect the data, which is restored as the replay memory $\mathcal{M}$. After $\mathcal{M}$ is collected, the policy evaluation and improvement are executed by randomly sampling a batch of data in $\mathcal{M}$. Then, the improved policy $\pi_{\phi}(a_{k}|\tilde{x}_{k})$ is applied to the state estimator to generate data until $\mathcal{L}_{\theta}(k)$ converges.
	
	In the learning stage, the policy is obtained by repeatedly executing the policy evaluation and policy improvement. For the policy evaluation, it starts from any function $\mathcal{L}:$ $\mathcal{S}\times \mathcal{A}\rightarrow R$ under a fixed policy $\pi$, and repeatedly applies a Bellman backup operator $\mathcal{T}^{\pi}$, which is defined as
	\begin{equation}\label{BacOpe}
	\begin{split}
	\mathcal{T}^{\pi}\mathcal{L}_{\pi}(k)
	&= \mathcal{C}_{k}+\gamma\mathbb{E}_{\tilde{x}_{k+1}}\left[\mathbb{E}_{\pi}\left[ \mathcal{L}_{\pi}(k+1) \right] \right].
	\end{split}
	\end{equation}
	
	At each policy evaluation step, the policy $\pi_{\phi}(a_{k}|\tilde{x}_{k})$ should minimise the following Bellman residual equation
	\begin{equation*}
	\mathcal{J}_{\mathcal{L}}(\theta) = \mathbb{E}_{(\tilde{x}_{k},a_{k}\sim \mathcal{M})}\left\{\frac{1}{2}\left(\mathcal{L}_{\theta}(k) - \mathcal{L}_{\text{target}} \right)^{2} \right\},
	\end{equation*}
	where $(\tilde{x}_{k},a_{k}\sim \mathcal{M})$ represents the operation that randomly takes $(\tilde{x}_{k},a_{k})$ from the memory $\mathcal{M}$, and
	\begin{equation}
	\begin{aligned}
	\mathcal{L}_{\text{target}} = \mathcal{C}_{k} + \gamma \mathbb{E}_{\tilde{x}_{k+1}}\left[\mathbb{E}_{\pi}\left[\mathcal{L}_{\bar{\theta}}(k+1)+ \alpha \ln (\pi_{\phi})\right] \right],
	\nonumber
	\end{aligned}
	\end{equation}
	{with $\bar{\theta}$ being the target network parameter.}
	
	We can obtain the following by using stochastic gradient descent:
	\begin{equation*}
	\nabla_{\theta}\mathcal{J}_{\mathcal{L}}(\theta) = \sum \frac{\nabla_{\theta}\mathcal{L}_{\theta}}{|\mathcal{B}|}\left(\mathcal{L}_{\theta}(k) - \mathcal{L}_{\text{target}} \right),
	\end{equation*}
	where $|\mathcal{B}|$ denotes the batch size.
	
	At the policy improvement stage, the improved policy should guarantee the Lyapunov inequality in Theorem \ref{theo2} holds. Thus, the policy is updated as
	\begin{equation}
	\begin{aligned}
	&\pi_{\text{new}} = \arg \min\limits_{\pi'\in \Pi}\mathscr{D}_{\text{KL}}\left(\pi'(\cdot|\tilde{x}_{k})\|\frac{e^{\frac{-1}{\alpha}\mathcal{L}_{\pi_{\text{old}}}}(\tilde{x}_{k},\cdot)}{Z_{\pi_{\text{old}}}(\tilde{x}_{k})} \right) \\
	\text{s.t.}~ &\mathcal{L}_{\theta}(k+1)-\mathcal{L}(k) \leq -\beta \Tr(\tilde{x}_{k}\tilde{x}^{\top}_{k})+\delta_{k},
	\label{PolUpd}
	\end{aligned}
	\end{equation}
	where $\Pi$ is the policy set, $\pi_{\text{old}}$ is the last updated policy, $\mathcal{L}_{\pi_{\text{old}}}$ is the action value function of $\pi_{\text{old}}$, $\mathscr{D}_{\text{KL}}$ means the Kullback-Leibler divergence, and $Z_{\pi_{\text{old}}}$ is a partition function which is introduced to normalise the distribution.
	
	\begin{remark}
		Different from the SAC algorithm, the Lyapunov constraint, that is, $\mathcal{L}_{\theta}(k+1)-\mathcal{L}(k) \leq -\beta \Tr(\tilde{x}_{k}\tilde{x}^{\top}_{k})+\delta_{k}$ is considered when the policy improvement step is executed. In this way, we can guarantee that the estimate error always converges to a positive constant in mean square, which is proved in Theorem \ref{theo2}.
	\end{remark}
	
	Then we can rewrite \eqref{PolUpd} as
	\begin{equation}
	\begin{aligned}
	&\pi_{\text{new}} = \arg \min\limits_{\pi\in \Pi}\mathbb{E}\left[\alpha \ln(\pi(a_{k}|\tilde{x}_{k}))+\mathcal{L}(k) \right] \\
	\text{s.t.}~ &\mathcal{L}_{\theta}(k+1)-\mathcal{L}(k) \leq -\beta \Tr(\tilde{x}_{k}\tilde{x}^{\top}_{k})+\delta_{k}.
	\label{PolUpd-1}
	\end{aligned}
	\end{equation}

	For the optimisation of \eqref{PolUpd-1}, the Lagrangian multiplier can be introduced to deal with the constraint. Thus, \eqref{PolUpd-1} can be further described as
	\begin{equation} \label{PolUpd-2}
	\begin{split}
	\pi_{\text{new}} &= \arg \min\limits_{\pi\in \Pi}\mathbb{E}\left[\alpha \ln(\pi(a_{k}|\tilde{x}_{k}))+\mathcal{L}(k) \right] \\
	&+\lambda\left(\mathcal{L}_{\theta}(k+1)-\mathcal{L}(k) + \beta \Tr(\tilde{x}_{k}\tilde{x}^{\top}_{k})-\delta_{k}\right),
	\end{split}
	\end{equation}
	where $\lambda$ is a Lagrangian multiplier.

	Based on the setup of RL, the policy improvement in \eqref{PolUpd-2} is converted into finding $\pi^{\ast}$ by minimising the following function
	\begin{equation*}
	\mathcal{J}_{\pi}(\phi) = \mathbb{E}_{(\tilde{x}_{k},a_{k}\sim \mathcal{M})}\left\{\alpha \ln(\pi_{\phi}) + \mathcal{L}_{\theta}(k) \right\},
	\end{equation*}
	whose gradient in terms of $\phi$ is derived as
	\begin{equation*}
	\begin{split}
	\nabla_{\phi}\mathcal{J}_{\pi}(\phi) &= \sum\frac{ \left(\alpha\nabla_{a_{k}}\ln \pi_{\phi} + \nabla_{a_{k}}\mathcal{L}_{\theta}(k) \right) \nabla_{\phi}a_{\phi} + \alpha \nabla_{\phi}\ln \pi_{\phi}}{|\mathcal{B}|}.
	\end{split}
	\end{equation*}
	
	For the temperature $\alpha$, it is updated by minimising the following function
	\begin{equation*}
	\mathcal{J}_{\alpha} = \mathbb{E}_{\pi}\left\{-\alpha \ln \pi(a_{k}|\tilde{x}_{k}) - \alpha \mathcal{H} \right\},
	\end{equation*}
	where $\mathcal{H}$ is a target entropy.
	
	For the Lagrangian multiplier $\lambda$, it is learned by maximising
	\begin{equation*}
	\mathcal{J}(\lambda) =\mathbb{E}\left[ \mathcal{L}_{\theta}(k+1)-\mathcal{L}(k) + \beta \Tr(\tilde{x}_{k}\tilde{x}^{\top}_{k})-\delta_{k}\right].
	\end{equation*}
	
	\begin{figure*}[tb]
		\centering
		\includegraphics[width=0.88\textwidth]{./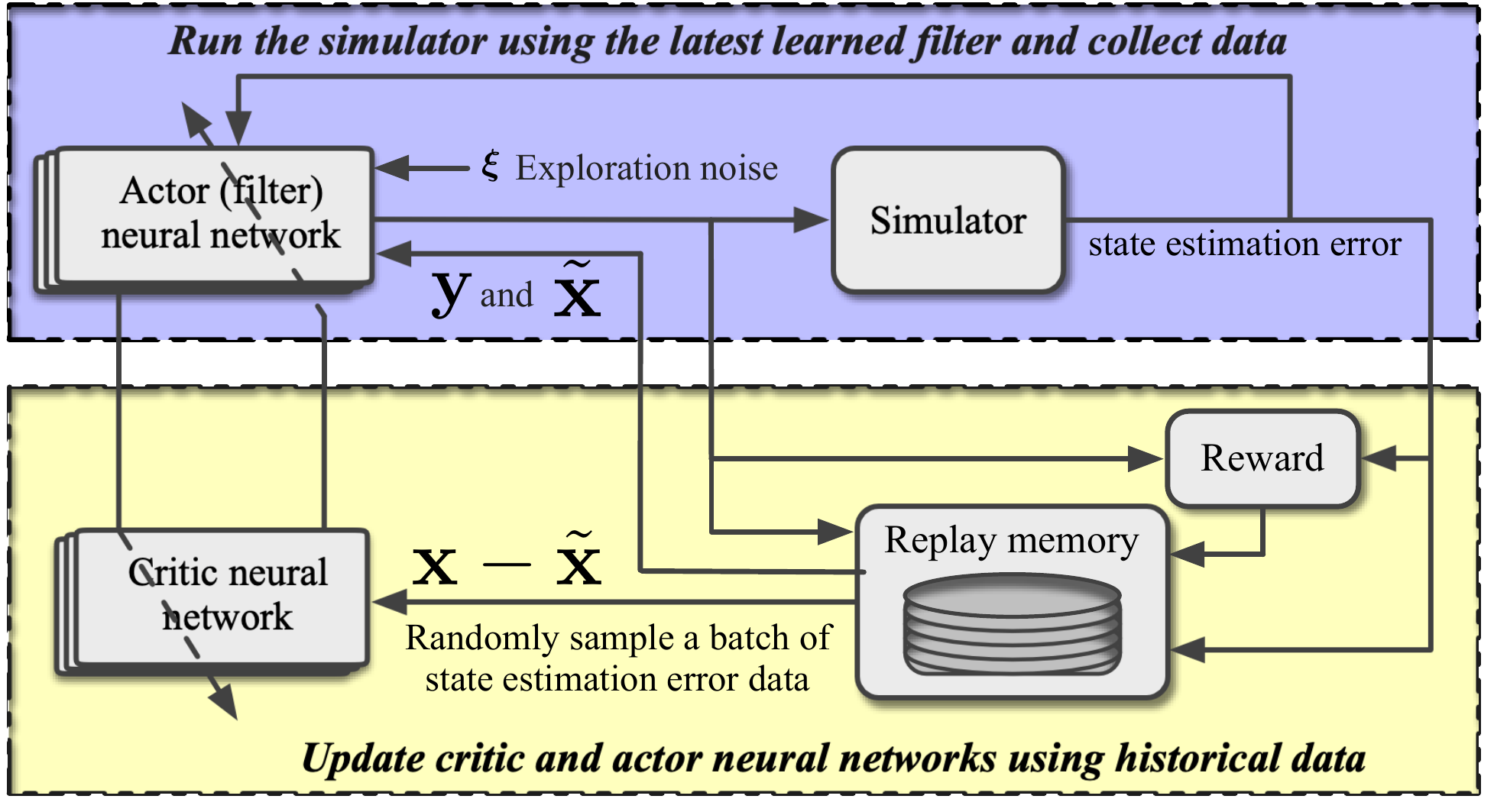}
		\caption{Offline training process of LRLF}
		\label{fig:TrainRL}
	\end{figure*}
	
	Our algorithm is implemented based on SAC algorithm \cite{Haarnoja2018SAC1}, in which $\iota_{\mathcal{L}}, \iota_\pi$, $\iota_\alpha$, and $\iota_{\lambda}$ are the positive learning rates, and $\tau>0$ is a constant scalar.
	The optimal parameters for the DNN in \eqref{eq:MLP_Q} and \eqref{eq:MLP_u} will be learned, and the filter gain policy will be approximated by $\pi_{\phi^{\ast}}$ from which action will be sampled. During inference, the mean value of $\pi_{\phi^{\ast}}$ will be deployed since the policy is often assumed to be Gaussian distributed in SAC \cite{Haarnoja2018SAC1}.

	The inference procedure is illustrated in Fig.~\ref{fig:estimator}. The learned policy is deployed to tune the error $y_{k+1}-g(f(\hat{x}_{k}))$ in the estimator. We sample the measurement output signal $y_{k+1}$ from the real system. Then, the estimator starts to estimate states for the real system.
	
	\begin{figure*}[tb]
		\centering
		\includegraphics[width=0.87\textwidth]{./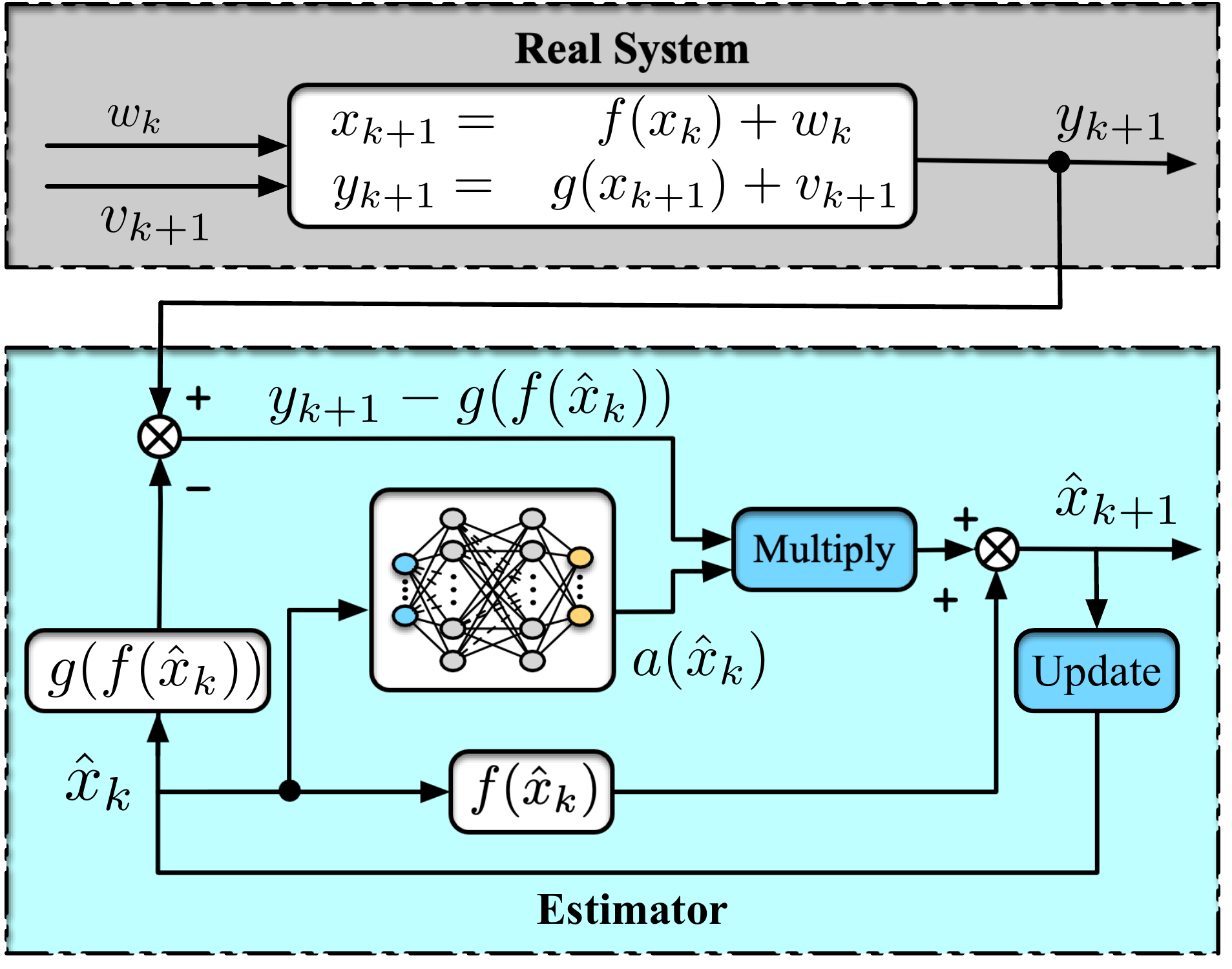}
		\caption{Estimator structure}
		\label{fig:estimator}
		\vspace{-0.1cm}
	\end{figure*}
	
	\begin{algorithm}
		\caption{Lyapunov-Based Reinforcement Learning Filter Algorithm (LRLF)}
		\label{alg1}
		\begin{algorithmic}[1]
			
			\State Set the initial parameters $\theta$ for the Lyapunov function $\mathcal{L}_{\theta}$, $\phi$ for the filtering policy $\pi_{\phi}$, $\lambda$ for the Lagrangian multiplier, $\alpha$ for the temperature parameter, and the replay memory $\mathcal{M}$
			
			\State Set the target parameter $\bar{\theta}$ as $\bar{\theta}\leftarrow \theta$
			
			\While{Training}
			\For {each data collection step}
			\State Choose $a_{k}$ using $\pi_{\theta}(a_{k}|\tilde{x}_{k})$
			\State Run the system (\ref{equ01}) and the filter system (\ref{equ02}) and collect data $\tilde{x}_{k}$
			\State $\mathcal{M} \leftarrow \mathcal{M} \cup \tilde{x}_{k}$
			\EndFor
			\For {each gradient step}
			\State $\theta \leftarrow \theta  - \iota_{\mathcal{L}}\nabla_{\theta}\mathcal{J}_{\mathcal{L}}(\theta)$,
			\State $\phi \leftarrow \phi - \iota_{\pi}\nabla_{\phi}\mathcal{J}_{\pi}(\phi)$
			\State $\alpha \leftarrow \alpha - \iota_{\alpha}\nabla_{\alpha}\mathcal{J}_{\alpha}(\alpha)$
			\State $\lambda \leftarrow \lambda - \iota_{\lambda}\nabla_{\lambda}\mathcal{J}_{\lambda}(\lambda)$
			\State $\phi_{\bar{\theta}} \leftarrow \tau\theta + (1-\tau) \phi_{\bar{\theta}}$,
			\EndFor
			\EndWhile
			\State Output optimal parameters $\theta^{\ast}$, $\phi^{\ast}$, $\lambda^{\ast}$, and $\alpha^{\ast}$
		\end{algorithmic}
	\end{algorithm}
	\vspace{-0.2cm}

	\begin{remark}
		Bayesian nonlinear filtering methods such as the EKF, UKF, PF adjust the estimator gains or sampling weights via online computation. The proposed LRLF is trained offline, and the filter gain is approximated by a DNN which will be deployed directly for online applications. In this paper, like other methods, the training needs full knowledge of the mathematical model, i.e., \eqref{equ01}. It should also be noted that the filter \eqref{equ02} is trained by only using the samples simulated from \eqref{equ01} instead of any other assumptions on the model. In other words, the mathematical model is a simulator, and our training is performed in a model-free manner. In Figs. \ref{fig:TrainRL} and \ref{fig:estimator}, the statistical information of the system's noise such as the covariance is not directly used by the LRLF, which is different from the EKF, UKF and PF where such statistical information is used explicitly.
	\end{remark}

	\subsection{Algorithm convergence analysis}
	
	Next, a lemma is given to show that the policy evaluation can guarantee the action value function to converge. Since the proof is standard, it is omitted here. Readers can refer to \cite{Haarnoja2018SAC1} for more details.
	\begin{lemma}\label{lem1}(Policy evaluation)
		Consider the backup operator $\mathcal{T}^{\pi}$ in (\ref{BacOpe}) and define $\mathcal{L}^{t+1}(k)\define\mathcal{T}^{\pi}\mathcal{L}^{t}(k)$. The sequence $\mathcal{L}^{t+1}(k)$ can converge to a soft value $\mathcal{L}^{\pi}$ of the policy $\pi$ as the iteration $t\rightarrow \infty$.
	\end{lemma}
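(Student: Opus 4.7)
The plan is to reduce this statement to the classical contraction-mapping argument used for soft policy evaluation in the SAC framework. The high-level strategy has three steps: (i) verify that the backup operator $\mathcal{T}^{\pi}$ in \eqref{BacOpe} is a $\gamma$-contraction in the supremum norm on the space of bounded functions $\mathcal{L}:\mathcal{S}\times\mathcal{A}\rightarrow\mathbb{R}$; (ii) invoke the Banach fixed-point theorem to conclude existence and uniqueness of a fixed point $\mathcal{L}^{\pi}$; (iii) identify this fixed point with the soft value of $\pi$ by matching the fixed-point equation with \eqref{eq:Q_SAC}.

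First I would take two arbitrary candidates $\mathcal{L}_1$ and $\mathcal{L}_2$ and expand the difference $\mathcal{T}^{\pi}\mathcal{L}_{1}(k)-\mathcal{T}^{\pi}\mathcal{L}_{2}(k)$ directly from \eqref{BacOpe}. The cost term $\mathcal{C}_{k}$ does not depend on $\mathcal{L}$ and cancels; the entropy contribution $\alpha\ln\pi$ depends only on the fixed policy and not on the iterate, so it also cancels. What remains is $\gamma\,\mathbb{E}_{\tilde{x}_{k+1}}\mathbb{E}_{\pi}[\mathcal{L}_{1}(k+1)-\mathcal{L}_{2}(k+1)]$, whose magnitude is bounded pointwise by $\gamma\|\mathcal{L}_{1}-\mathcal{L}_{2}\|_{\infty}$ because expectations of bounded quantities cannot exceed their sup-norm. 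Taking the supremum over $(\tilde{x}_{k},a_{k})$ on the left gives the contraction estimate $\|\mathcal{T}^{\pi}\mathcal{L}_{1}-\mathcal{T}^{\pi}\mathcal{L}_{2}\|_{\infty}\le \gamma\|\mathcal{L}_{1}-\mathcal{L}_{2}\|_{\infty}$, and $\gamma\in[0,1)$ makes this a strict contraction. Applying the Banach fixed-point theorem on the complete metric space of bounded measurable functions on $\mathcal{S}\times\mathcal{A}$ equipped with the sup-norm then yields a unique fixed point $\mathcal{L}^{\pi}$ together with the geometric rate $\|\mathcal{L}^{t}-\mathcal{L}^{\pi}\|_{\infty}\le \gamma^{t}\|\mathcal{L}^{0}-\mathcal{L}^{\pi}\|_{\infty}\to 0$ as $t\to\infty$.

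The main obstacle is ensuring that the iterates live in a complete space, i.e., that the sup-norm is actually finite so the contraction estimate is meaningful. This needs boundedness of the one-step modified cost $\mathcal{C}_{k}-\alpha\mathcal{H}(\pi)$ along trajectories: the quadratic cost $\mathcal{C}_{k}=\tilde{x}_{k}^{\top}\tilde{x}_{k}$ is bounded once the estimate error is restricted to the compact region of interest guaranteed by Theorem \ref{theo2}, and the policy entropy term is bounded because of the Gaussian parameterisation \eqref{eq:MLP_Pi} with trained finite variance. Under these standard assumptions the argument reduces verbatim to the policy-evaluation lemma of \cite{Haarnoja2018SAC1}, justifying the authors' decision to omit the detailed proof.
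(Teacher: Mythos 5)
Your proposal is correct and follows exactly the route the paper intends: the paper omits the proof and defers to the soft policy evaluation lemma of \cite{Haarnoja2018SAC1}, which is precisely the $\gamma$-contraction plus Banach fixed-point argument you reconstruct, with the cost and entropy terms cancelling in the difference $\mathcal{T}^{\pi}\mathcal{L}_{1}-\mathcal{T}^{\pi}\mathcal{L}_{2}$. The one soft spot, which you at least flag while the paper does not, is the boundedness of the augmented cost: Theorem~\ref{theo2} only gives mean-square boundedness of $\tilde{x}_{k}$ rather than confinement to a compact set, so the finiteness of $\|\mathcal{L}\|_{\infty}$ for the quadratic cost $\mathcal{C}_{k}=\tilde{x}_{k}^{\top}\tilde{x}_{k}$ on an unbounded state space strictly requires an additional assumption (or a weighted sup-norm), exactly as the bounded-reward hypothesis does in \cite{Haarnoja2018SAC1}.
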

	
	For policy improvement, a lemma is given to show that the updated policy is better than the last one.
	\begin{lemma}\label{lem2}(Policy improvement)
		Considering the last updated policy $\pi_{\text{old}}$ and the new policy $\pi_{\text{new}}$ to be obtained from (\ref{PolUpd-1}), $\mathcal{L}^{\pi_{\text{new}}(k)} \leq \mathcal{L}^{\pi_{\text{old}}(k)}$ holds, $\forall \tilde{x}_{k}\in \mathcal{S}$ and $\forall a_{k} \in \mathcal{A}$.
	\end{lemma}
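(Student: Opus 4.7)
The plan is to mimic the classical soft policy improvement argument of SAC, while carefully handling the extra Lyapunov feasibility constraint that distinguishes \eqref{PolUpd-1} from the standard update. I will fix an arbitrary state $\tilde{x}_k\in\mathcal{S}$ and action $a_k\in\mathcal{A}$, treat $\mathcal{L}^{\pi_{\text{old}}}(k)$ as the soft action-value obtained from the converged policy evaluation of Lemma~\ref{lem1}, and try to build a contraction-style chain of inequalities that propagates the per-step objective drop into a drop on the full action-value.

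First, I would note that $\pi_{\text{old}}$ itself is feasible for the constrained problem \eqref{PolUpd-1} at the current iteration (either because the previous iterate satisfied the Lyapunov inequality, or by adding it as a standing assumption as is standard in Lyapunov-constrained actor-critic methods). By the optimality of $\pi_{\text{new}}$, the value of the Lagrangian objective evaluated at $\pi_{\text{new}}$ cannot exceed its value at $\pi_{\text{old}}$, which after absorbing the normaliser $Z_{\pi_{\text{old}}}$ and rewriting in the form of \eqref{PolUpd-2} gives the pointwise bound
\begin{equation*}
\mathbb{E}_{a_k\sim\pi_{\text{new}}}\!\bigl[\alpha\ln\pi_{\text{new}}(a_k|\tilde{x}_k)+\mathcal{L}^{\pi_{\text{old}}}(k)\bigr]\leq \mathbb{E}_{a_k\sim\pi_{\text{old}}}\!\bigl[\alpha\ln\pi_{\text{old}}(a_k|\tilde{x}_k)+\mathcal{L}^{\pi_{\text{old}}}(k)\bigr],
\end{equation*}
where the right-hand side is precisely the soft value $V^{\pi_{\text{old}}}(\tilde{x}_k)$ associated with $\mathcal{L}^{\pi_{\text{old}}}$.

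Next, I would apply the soft Bellman equation for $\pi_{\text{old}}$, namely $\mathcal{L}^{\pi_{\text{old}}}(\tilde{x}_k,a_k)=\mathcal{C}_k+\gamma\mathbb{E}_{\tilde{x}_{k+1}}[V^{\pi_{\text{old}}}(\tilde{x}_{k+1})]$, and substitute the inequality above to obtain
\begin{equation*}
\mathcal{L}^{\pi_{\text{old}}}(\tilde{x}_k,a_k)\;\geq\;\mathcal{C}_k+\gamma\mathbb{E}_{\tilde{x}_{k+1}}\!\bigl[\mathbb{E}_{a_{k+1}\sim\pi_{\text{new}}}[\alpha\ln\pi_{\text{new}}(a_{k+1}|\tilde{x}_{k+1})+\mathcal{L}^{\pi_{\text{old}}}(\tilde{x}_{k+1},a_{k+1})]\bigr].
\end{equation*}
Unrolling this one-step bound recursively, using $\gamma\in[0,1)$ to guarantee that the remainder terms vanish, the infinite-horizon expansion on the right-hand side converges to $\mathcal{L}^{\pi_{\text{new}}}(\tilde{x}_k,a_k)$ by the definition of the soft action-value function (Lemma~\ref{lem1} applied to $\pi_{\text{new}}$). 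This yields $\mathcal{L}^{\pi_{\text{old}}}(\tilde{x}_k,a_k)\geq\mathcal{L}^{\pi_{\text{new}}}(\tilde{x}_k,a_k)$, which is exactly the claim.

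The main obstacle I foresee is justifying the feasibility of $\pi_{\text{old}}$ with respect to the Lyapunov constraint: without this, the constrained minimiser $\pi_{\text{new}}$ might have a strictly larger objective value than $\pi_{\text{old}}$ and the starting inequality fails. I would handle this either by strengthening Lemma~\ref{lem2} with an explicit feasibility hypothesis on $\pi_{\text{old}}$ (consistent with the algorithm, since every previously accepted iterate satisfied the constraint at the time it was selected) or by absorbing the constraint into the Lagrangian with multiplier $\lambda\geq 0$ as in \eqref{PolUpd-2}, so that the same optimality argument delivers the inequality even when the constraint is only approximately enforced. The subsequent Bellman recursion and convergence to $\mathcal{L}^{\pi_{\text{new}}}$ are then routine and parallel the SAC proof in \cite{Haarnoja2018SAC1}.
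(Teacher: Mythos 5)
Your proposal follows essentially the same route as the paper's proof: derive the pointwise objective inequality from the optimality of $\pi_{\text{new}}$ in \eqref{PolUpd-1}, identify the right-hand side with $V_{\pi_{\text{old}}}(\tilde{x}_{k})$, then substitute into the soft Bellman equation and unroll recursively to conclude $\mathcal{L}_{\pi_{\text{new}}}(k)\leq\mathcal{L}_{\pi_{\text{old}}}(k)$. Your explicit attention to the feasibility of $\pi_{\text{old}}$ under the Lyapunov constraint is a legitimate gap-filling remark that the paper's proof silently skips, but it does not change the structure of the argument.
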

	\begin{proof}
		According to (\ref{PolUpd-1}), we can obtain
		\begin{equation*}
		\begin{split}
		\mathbb{E}_{\pi_{\text{new}}}\left[\alpha\ln(\pi_{\text{new}}(a_{k}|\tilde{x}_{k}))+\mathcal{L}_{\pi_{\text{old}}}(k) \right] \leq \\
		\mathbb{E}_{\pi_{\text{old}}}\left[\alpha\ln(\pi_{\text{old}}(a_{k}|\tilde{x}_{k}))+\mathcal{L}_{\pi_{\text{old}}}(k) \right],
		\end{split}
		\end{equation*}
		which implies
		\begin{equation}
		\begin{split}
		\mathbb{E}\left[\mathcal{L}_{\pi_{\text{old}}}+\alpha\ln(\pi_{\text{new}}(a_{k}|\tilde{x}_{k})) \right] \leq V_{\pi_{\text{old}}}(\tilde{x}_{k}).
		\end{split}
		\label{RepeatedE}
		\end{equation}
		
		Then, the following inequality holds
		\begin{equation*}
		\begin{split}
		\mathcal{L}_{\pi_{\text{old}}}(k) &= \mathcal{C}_{k} + \gamma\mathbb{E}_{\tilde{x}_{k+1}}\left[V_{\pi_{\text{old}}}(\tilde{x}_{k}) \right] \\
		& \geq \mathcal{C}_{k} + \gamma\mathbb{E}_{\tilde{x}_{k+1}}\left[\mathbb{E}_{\pi_{\text{new}}}\left[ \right.\right.\\
		& \mathcal{L}_{\pi_{\text{old}}}(k+1) \\
		&\left.\left.+\alpha  \ln(\pi_{\text{new}}(a_{k+1}|\tilde{x}_{k+1})) \right] \right] \\
		&\vdots \\
		& \geq \mathcal{L}_{\pi_{\text{new}}}(k),
		\end{split}
		\end{equation*}
		where \eqref{RepeatedE} is repeatedly used and hence omitted. It completes the proof.
	\end{proof}
	
	Next, a theorem is derived to show that the convergence of Algorithm \ref{alg1} can be guaranteed.
	
	\begin{theorem}\label{theo1}
		Define $\pi_{i} ~(i=1,2,\ldots,\infty)$ as the policy obtained at the $i$-th policy improvement step, starting from any policy $\pi_{0} \in \Pi$, where $\Pi$ is the policy set, then $\pi_{i}$ will converge to an optimal policy $\pi^{\star}$, ensuring $\mathcal{L}_{\pi^{\star}}(k)$ converges to its minimal value  as $k\rightarrow \infty$.
	\end{theorem}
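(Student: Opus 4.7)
The plan is to combine the monotonicity result from Lemma~\ref{lem2} with a standard monotone-bounded-sequence argument, and then use the Lyapunov-based constraint inherited from Theorem~\ref{theo2} to conclude that the limit policy drives $\mathcal{L}_{\pi^{\star}}(k)$ to its minimal value as $k\to\infty$. First, I would observe that the cost $\mathcal{C}_{k}=\tilde{x}_{k}^{\top}\tilde{x}_{k}$ is non-negative, so by \eqref{c+delta} every iterate $\mathcal{L}_{\pi_{i}}(k)$ is non-negative and hence bounded below by $0$ uniformly in $i$. By Lemma~\ref{lem2}, the sequence $\{\mathcal{L}_{\pi_{i}}(k)\}_{i\geq 0}$ is pointwise non-increasing in $i$ for every fixed $(\tilde{x}_{k},a_{k})\in\mathcal{S}\times\mathcal{A}$. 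The monotone convergence theorem then guarantees the existence of a pointwise limit $\mathcal{L}_{\pi^{\star}}(k)\triangleq\lim_{i\to\infty}\mathcal{L}_{\pi_{i}}(k)$.

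Next, I would argue that the limit corresponds to a well-defined policy $\pi^{\star}\in\Pi$ and that $\pi^{\star}$ is optimal within $\Pi$. Since the policy improvement step in \eqref{PolUpd-1} minimises $\mathbb{E}[\alpha\ln\pi(a_{k}|\tilde{x}_{k})+\mathcal{L}(k)]$ over $\Pi$, at the limit the improved policy $\pi_{\text{new}}$ coincides with $\pi_{\text{old}}=\pi^{\star}$; otherwise, Lemma~\ref{lem2} would yield a strictly smaller Lyapunov value, contradicting convergence of $\{\mathcal{L}_{\pi_{i}}(k)\}$. This fixed-point property, combined with the Bellman backup operator of Lemma~\ref{lem1}, implies that for every $\pi\in\Pi$ and every $(\tilde{x}_{k},a_{k})$,
\begin{equation*}
\mathcal{L}_{\pi^{\star}}(k)\leq \mathcal{L}_{\pi}(k),
\end{equation*}
so $\pi^{\star}$ is optimal in $\Pi$.

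Finally, I would invoke Theorem~\ref{theo2} to show that $\mathcal{L}_{\pi^{\star}}(k)$ converges to its minimal value as $k\to\infty$. Because $\pi^{\star}$ is obtained as the limit of improvements that satisfy the Lyapunov constraint $\mathcal{L}_{\theta}(k+1)-\mathcal{L}(k)\leq -\beta\operatorname{Tr}(\tilde{x}_{k}\tilde{x}_{k}^{\top})+\delta_{k}$, the hypothesis \eqref{LyaCons} of Theorem~\ref{theo2} holds under $\pi^{\star}$. Therefore the estimate error is exponentially bounded in mean square by \eqref{index}, and by the sandwich bound $\alpha_{1}\mathcal{C}_{k}\leq\mathcal{L}(k)\leq\alpha_{2}\mathcal{C}_{k}$ in \eqref{LyaCons}, the Lyapunov function $\mathcal{L}_{\pi^{\star}}(k)$ is squeezed to its minimal value $\lim_{k\to\infty}\alpha_{2}^{-1}\mathcal{L}_{\pi^{\star}}(k)\leq p$ as $k\to\infty$, completing the proof.

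The main obstacle I anticipate is the fixed-point step: rigorously ruling out the possibility that the monotone decrease becomes arbitrarily slow without ever reaching a fixed point of the improvement operator, and showing that any limit point is indeed feasible for the Lyapunov constraint in \eqref{PolUpd-1}. One would likely need to appeal to continuity of the Bellman backup operator in the parameter $\phi$ of $\pi_{\phi}$ and to the closedness of the constraint set induced by the Lyapunov inequality; alternatively, one can invoke the Lagrangian formulation \eqref{PolUpd-2} and argue via complementary-slackness-type conditions on the multiplier $\lambda$. The rest of the argument is a standard monotone-convergence plus sandwich-bound exercise.
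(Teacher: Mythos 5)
Your core argument --- monotonicity from Lemma~\ref{lem2}, boundedness below by $0$ since $\mathcal{C}_{k}=\tilde{x}_{k}^{\top}\tilde{x}_{k}\geq 0$, and the monotone convergence theorem to extract the limit $\mathcal{L}_{\pi^{\star}}$ --- is the same skeleton as the paper's proof, which simply asserts that $\mathcal{L}_{\pi_{i}}(k)\leq\mathcal{L}_{\pi_{i-1}}(k)$ implies convergence of $\pi_{i}$ to $\pi^{\star}$ and then cites Lemma~\ref{lem1}. You are, however, considerably more explicit, and you diverge in the final step: the paper attributes the convergence of $\mathcal{L}_{\pi^{\star}}(k)$ to Lemma~\ref{lem1}, which concerns convergence of the policy-evaluation iterates in the backup index $t$, not in the time index $k$; your reading instead treats the ``$k\to\infty$'' in the statement literally and invokes Theorem~\ref{theo2} together with the sandwich bound $\alpha_{1}\mathcal{C}_{k}\leq\mathcal{L}(k)\leq\alpha_{2}\mathcal{C}_{k}$ to bound $\mathbb{E}[\mathcal{L}_{\pi^{\star}}(k)]$ by $\alpha_{2}p$ asymptotically. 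That route is arguably more faithful to what the theorem actually claims, and it makes visible a dependency the paper leaves implicit, namely that the limit policy must satisfy the Lyapunov decrease condition \eqref{LyaCons}. Two caveats: first, your sandwich step delivers an asymptotic \emph{bound} on $\mathbb{E}[\mathcal{L}_{\pi^{\star}}(k)]$, not convergence to a minimum in any strict sense, so you should phrase the conclusion as boundedness (which is all the paper can really claim either); second, the feasibility of $\pi^{\star}$ for the constraint in \eqref{PolUpd-1} is only enforced through the Lagrangian relaxation \eqref{PolUpd-2}, so your appeal to Theorem~\ref{theo2} tacitly assumes the multiplier argument (e.g.\ $\lambda\to 0$ with the constraint satisfied) goes through. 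The fixed-point obstacle you flag for the optimality of $\pi^{\star}$ is real and is not addressed by the paper at all --- its proof simply asserts convergence to an optimal policy --- so your honesty there exceeds the source rather than falling short of it.
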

	
	\begin{proof}
		Based on Lemma \ref{lem2}, we know that the policy can achieve a better estimate performance after each policy improvement, that is, $\mathcal{L}_{\pi_{i}}(k) \leq \mathcal{L}_{\pi_{i-1}}(k)$. By repeatedly executing the policy evaluation and policy improvement, a policy $\pi_{i}$ can converge to $\pi^{\star}$ as $i\rightarrow \infty$. Thus, $\mathcal{L}_{\pi^{\star}}(k)$ can converge based on the conclusion in Lemma \ref{lem1}.
	\end{proof}

	\section{Simulation}\label{sec:Sim}
	
	The experiment setup for the LRLF is a three-stage procedure. First, a number of $N$ estimation policies are trained for different initial conditions and noise is sampled from a known distribution. During training, the simulator will generate sample trajectories $\{x^i_{k},y^i_{k}, i=1,\ldots, I, k=1,\ldots,K\}$, where $I$, $K$ are the number of training trajectories and that of the trajectory length respectively. Second, the estimation performance of each DRL-based state estimator \eqref{equ02} will be evaluated by running $M$ Monte Carlo simulations, again for different initial conditions and noise sampled from a known distribution. Finally, the state estimator with the lowest trace of estimate error covariance during inference (unless diverged) will be deployed online and used for comparison with other nonlinear filtering algorithms.

	We first consider a free-pendulum tracking example widely used as a benchmark for nonlinear state estimation \cite{sarkka2013bayesian,morimoto2007reinforcement,deisenroth2011robust}. The pendulum has unity mass of $1$ kg and length of $1$ m. The discrete-time dynamics of the pendulum is given as follows:
	\begin{equation}\label{eq1:pendulum}
	\begin{bmatrix}
	x_{1,k+1}\\x_{2,k+1}
	\end{bmatrix}=\begin{bmatrix}
	x_{1,k}+x_{2,k}\delta t\\
	x_{2,k}-g\sin(x_{1,k})\delta t
	\end{bmatrix}+w_{k}
	\end{equation}
	where $x_{1,k}=\theta_{k}$, $x_{2,k}=\omega_{k}$ are the angle and angle velocity of the pendulum at time instant $k$, respectively, $\delta t$ is the sampling time and set as 0.1 second in the simulation. The process noise $w_{k}$ is Gaussian distributed as $$w_{k} \sim \mathcal{N}\bigg(\begin{bmatrix}
	0\\0
	\end{bmatrix},\begin{bmatrix}
	\frac{1}{3}(\delta t)^3q_1 & \frac{1}{2}(\delta t)^2q_1  \\\frac{1}{2}(\delta t)^2q_1 & \delta tq_1 \end{bmatrix}\bigg), q_1=0.01$$
	The measurement equation is given as:
	\begin{equation}\label{eq2:pendulum}
	y_{k}=\sin(x_{1,k})+v_{k}
	\end{equation}
	where the measurement noise is also Gaussian distributed with $v_{k} \sim \mathcal{N}(0,0.01)$. Since the scalar measurement solely depends on the angle ($x_1$), the estimate of the latent state $x_2$ has to be reconstructed using the cross-correlation information
	between the angle and the angular velocity in the dynamics \eqref{eq1:pendulum}.

	To test if the estimate error of LRLF converges regardless of the initial state and estimate, for each trajectory in the training, the initial state were sampled from uniform distribution:
	\begin{equation}
	\begin{aligned}
	\theta(0)&\sim \mathcal{U} [-0.5\pi, 0.5\pi],\\
	\omega(0)&\sim \mathcal{U} [-0.5\pi, 0.5\pi],
	\label{eq:initial_state}
	\end{aligned}
	\end{equation}
	and the initial estimate was sampled from uniform distribution:
	\begin{equation}
	\begin{aligned}
	\hat{\theta}(0)&\sim \mathcal{U} [-0.25\pi+\theta(0), 0.25\pi+\theta(0)],\\
	\hat{\theta}(0)&\sim \mathcal{U} [-0.25\pi+\omega(0), 0.25\pi+\omega(0)].
	\label{eq:initial_estimate}
	\end{aligned}
	\end{equation}
	
	We compared LRLF with three other classic nonlinear Bayesian estimation algorithms, the EKF, UKF, and PF ($10^3$ and $10^4$ particles respectively) \cite{sarkka2013bayesian}. The same initial state and state estimate in \eqref{eq:initial_state} and \eqref{eq:initial_estimate} were used for all estimation algorithms.
	
	In the training, each trajectory has $K=100$ data points ($10$ seconds simulation of \eqref{eq1:pendulum}). We trained $N=10$ policy networks and  evaluated each network for $M=500$ Monte Carlo simulations. The training details are give as follows:
	\begin{table}[htb]
		\caption{Hyperparameters of LRLF}\label{table:hyperparameters}
		\vspace{-0.2cm}
		\begin{center}
			\begin{tabular}{l|c c}
				Hyperparameters&Pendulum&Vehicle\\\hline
				Time horizon $K$&100&100\\
				Minibatch size& 256& 256\\
				Actor learning rate & 1e-4& 1e-4\\
				Critic learning rate & 3e-4& 3e-4\\
				Lyapunov learning rate & 3e-4& 3e-4\\
				Target entropy& NaN&NaN\\
				Soft replacement($\tau$) &0.005&0.005\\
				Discount($\gamma$)  &0.995&0.995 \\
				$\alpha_3$&0.1& 0.1 \\
				Structure of $a_\phi$ & (32,16)&(32,16)\\
				Structure of $L_\theta$ & (64,32) &(64,32)\\
			\end{tabular}
		\end{center}
		\vspace{-0.2cm}
	\end{table}
	
	For the LRLF, there are two networks: the policy network and the Lyapunov critic network. We use a fully-connected MLP with one hidden layer for the policy network, outputting the mean and SD of a Gaussian distribution. As mentioned in \autoref{sec:algorithm}, it should be noted that the output of the Lyapunov critic network is a square term, which is always non-negative. More specifically, we use a fully-connected MLP with one hidden layer and one output layer with different units as in \autoref{table:hyperparameters}, outputting the feature vector $Q_{\theta}$ (see Fig. \ref{fig:ACNN}). The Lyapunov critic function is obtained by $L_c(s,a)=Q^{\top}_{\theta}(s,a)Q_{\theta}(s,a)$. All the hidden layers use ReLu activation function, and we adopt the same invertible squashing function technique as~\cite{Haarnoja2018SAC1} to the output layer of the policy network.
	
	The proposed LRLF was evaluated for the following aspects:
	\begin{enumerate}
		\item Algorithm convergence: does the proposed training algorithm converge with random initial states and estimate initialisation?
		
		\item Estimate error convergence: does the estimate error variance converge compared with other state estimation algorithms?
		
		\item Performance comparison: how does the proposed LRLF perform compared with other state estimation algorithms under various initial state conditions?
		
		\item Robustness to uncertainty: how do the trained estimator perform during inference when faced with uncertainties unseen during training, such as noise with covariance shift and randomly-occurring missing measurement?
		
	\end{enumerate}

	\subsection{Algorithm convergence}
	
	We will validate the convergence guarantee by checking the values of Lagrange multipliers.  When the Lyapunov constraint in \eqref{PolUpd-2} is satisfied, the parameter $\lambda$ should continuously decrease to zero.  In Fig. \ref{fig:lambda}, the value of $\lambda$ during training is demonstrated. In all training trials of the ten policy networks,  $\lambda$ converges to zero eventually, which implies the state estimate's convergence guarantee.

	\begin{figure}[tb]
		\centering
		\includegraphics[width=0.38\textwidth]{./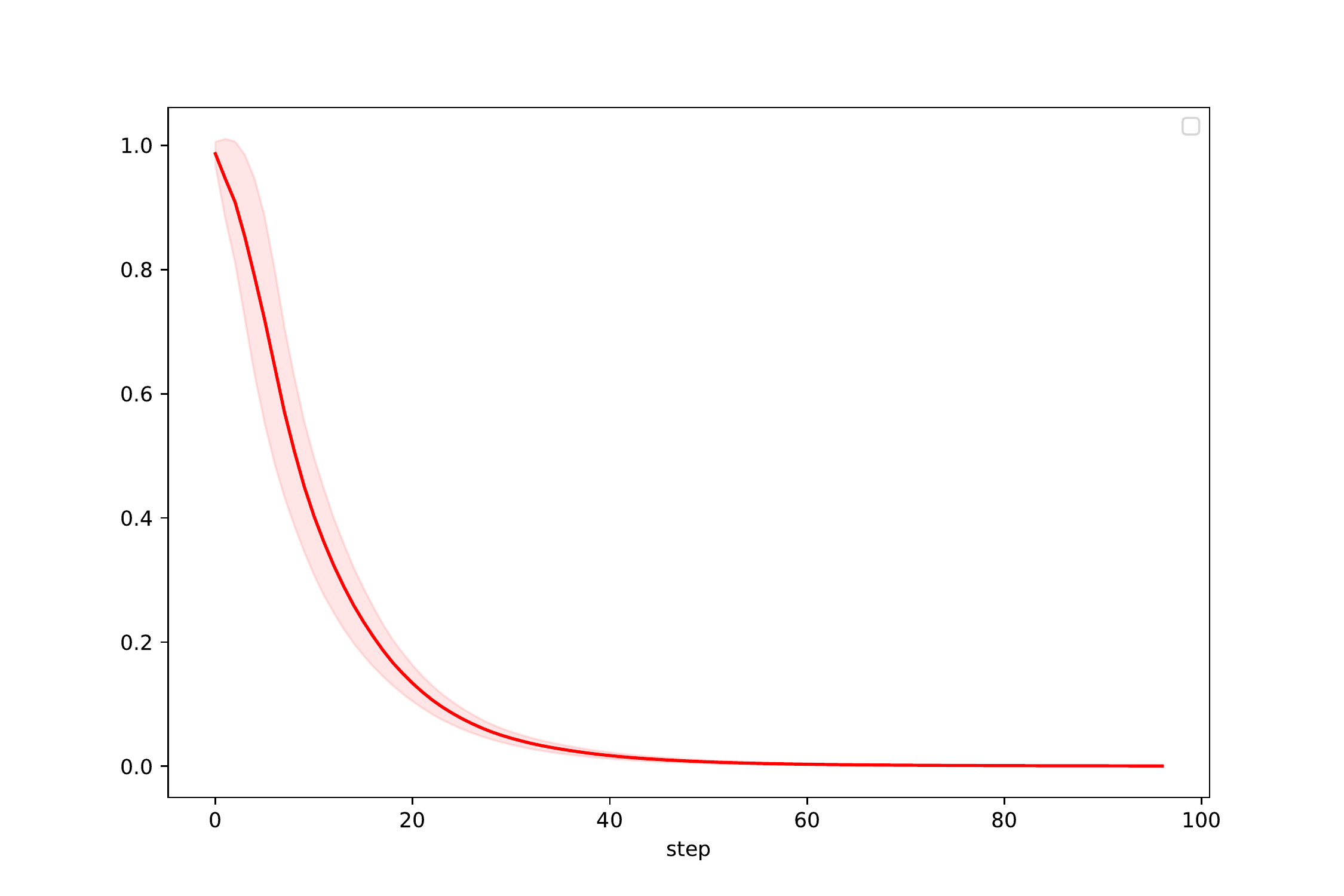}
		\caption{The value of Lagrange multiplier $\lambda$ during the training of LAC policies. The Y-axis indicates the value of $\lambda$ and the X-axis indicates the total time steps. The shadowed region shows the 1-SD confidence interval over 10 randomly training policies.}
		\label{fig:lambda}
		\vspace{-0.3cm}
	\end{figure}

	\begin{figure*}
		\centering
		\subfloat[EKF]{
			\begin{minipage}{.3\textwidth}
				\centering
				\includegraphics[width=0.95\textwidth]{./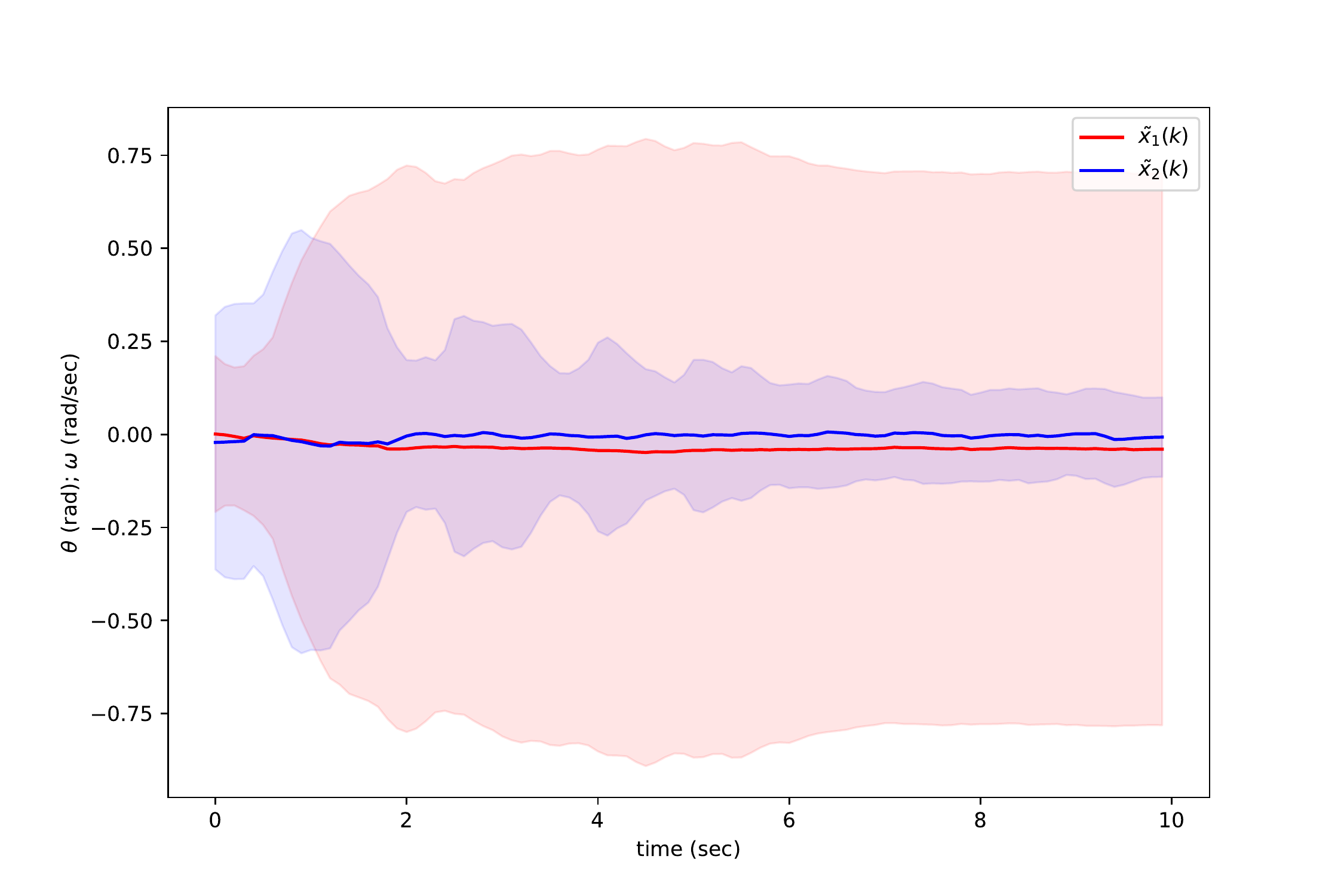}
		\end{minipage}}%
		\subfloat[EKF with noise variance $R=0.1$]{
			\begin{minipage}{.3\textwidth}
				\centering
				\includegraphics[width=0.95\textwidth]{./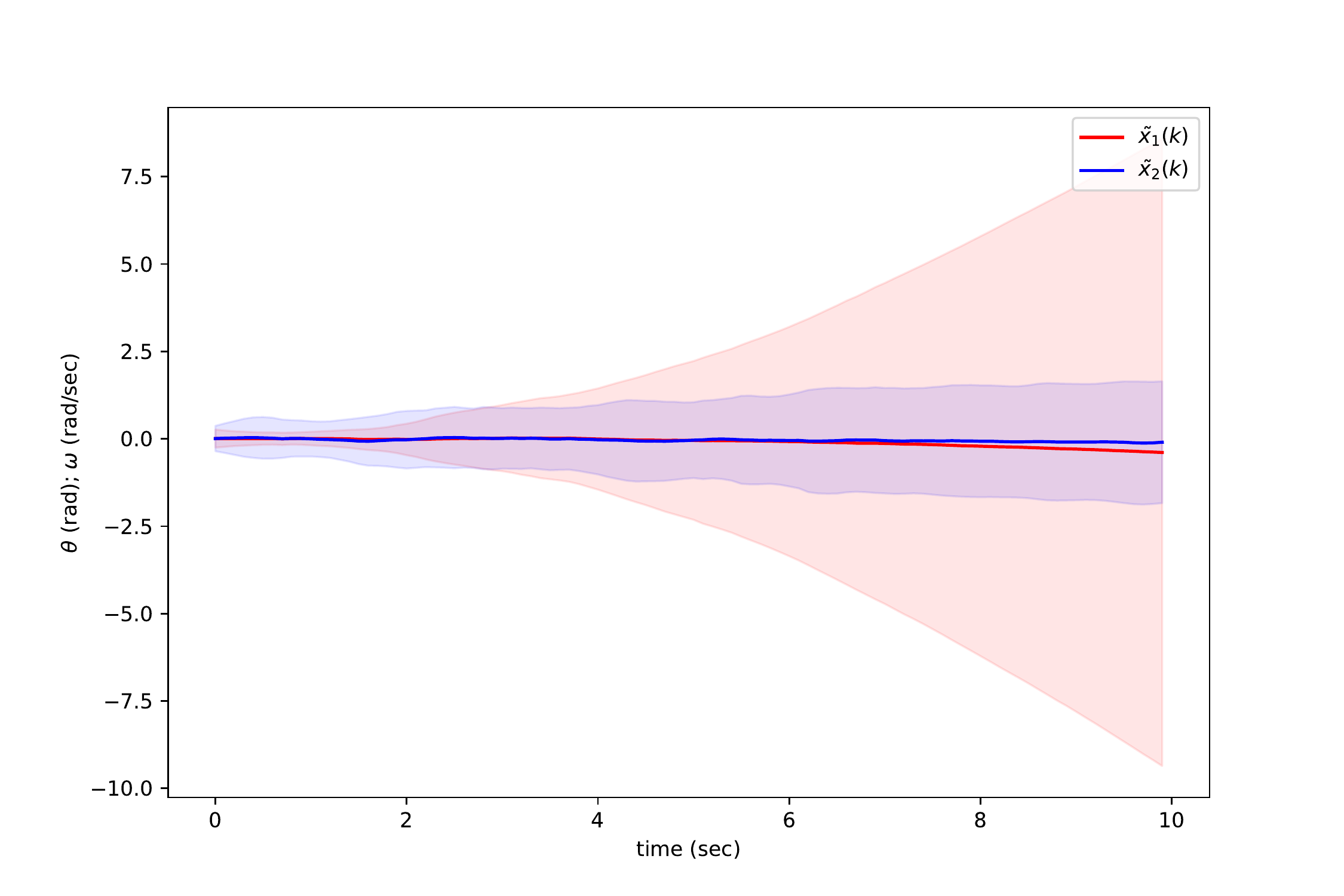}
		\end{minipage}}%
		\subfloat[EKF with measurement missing]{
			\begin{minipage}{.3\textwidth}
				\centering
				\includegraphics[width=0.95\textwidth]{./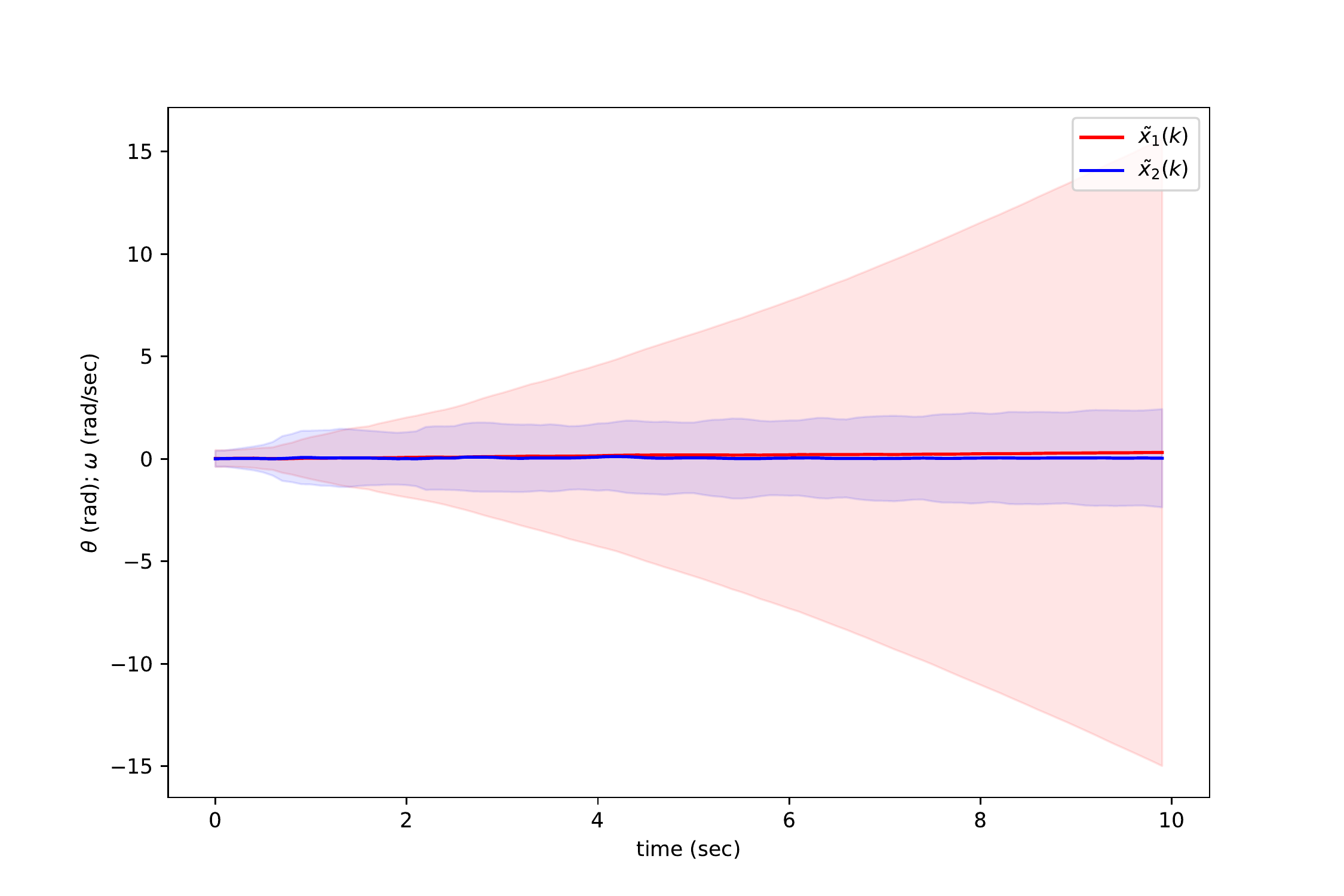}
		\end{minipage}}%
		
		\subfloat[UKF]{
			\begin{minipage}{.3\textwidth}
				\centering
				\includegraphics[width=0.95\textwidth]{./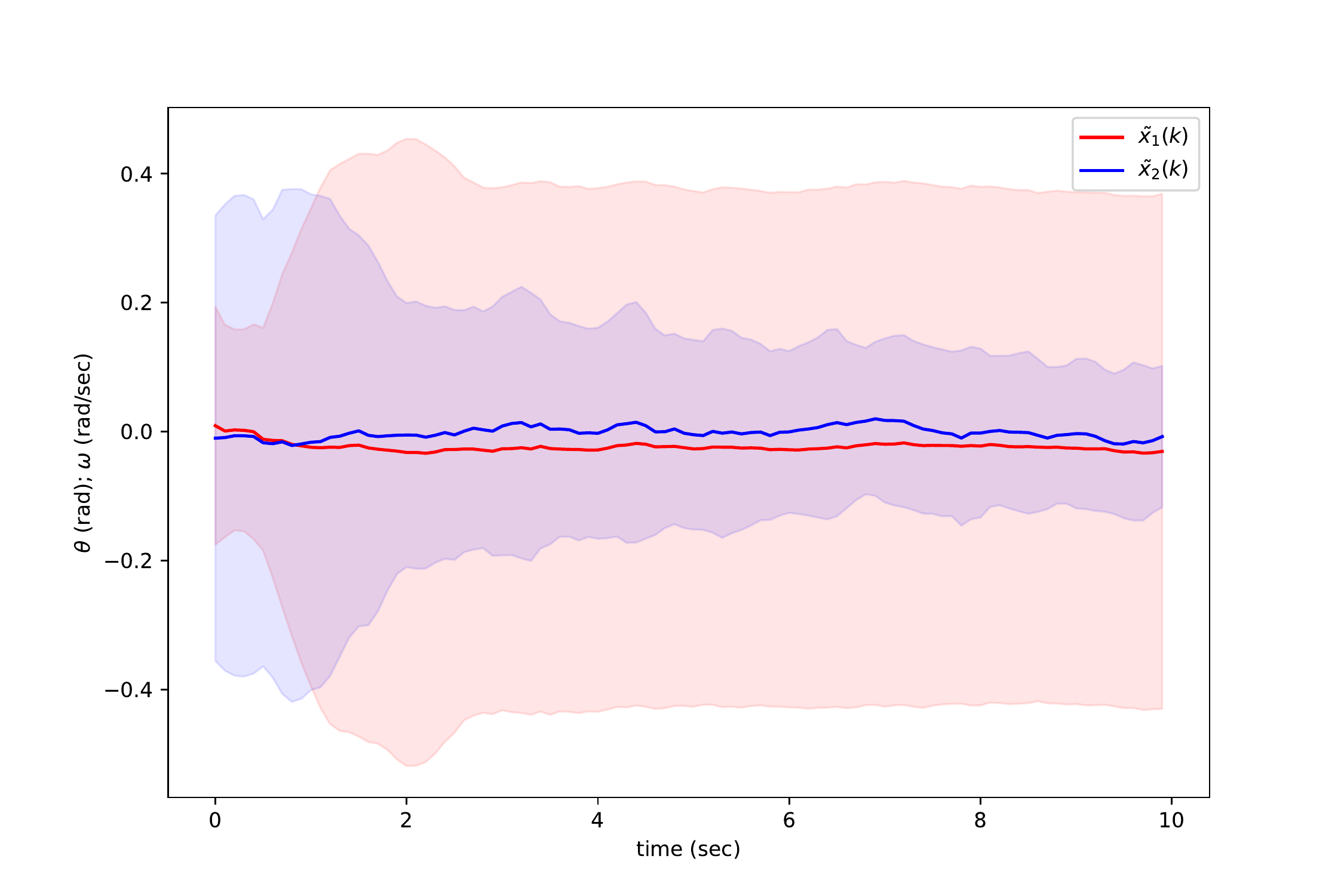}
		\end{minipage}}
		\subfloat[UKF with noise variance $R=0.1$]{
			\begin{minipage}{.3\textwidth}
				\centering
				\includegraphics[width=0.95\textwidth]{./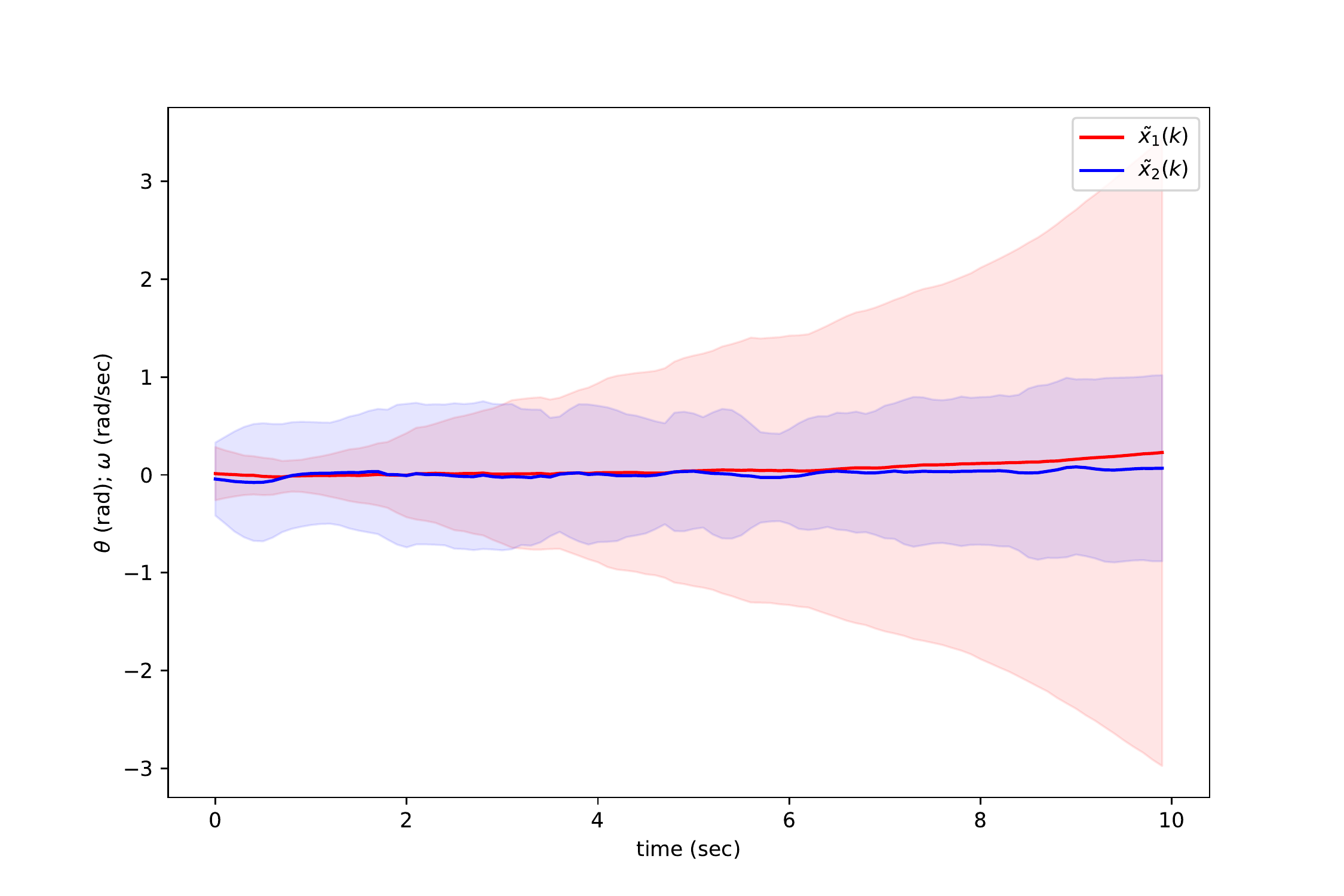}
		\end{minipage}}
		\subfloat[UKF with measurement missing]{
			\begin{minipage}{.3\textwidth}
				\centering
				\includegraphics[width=0.95\textwidth]{./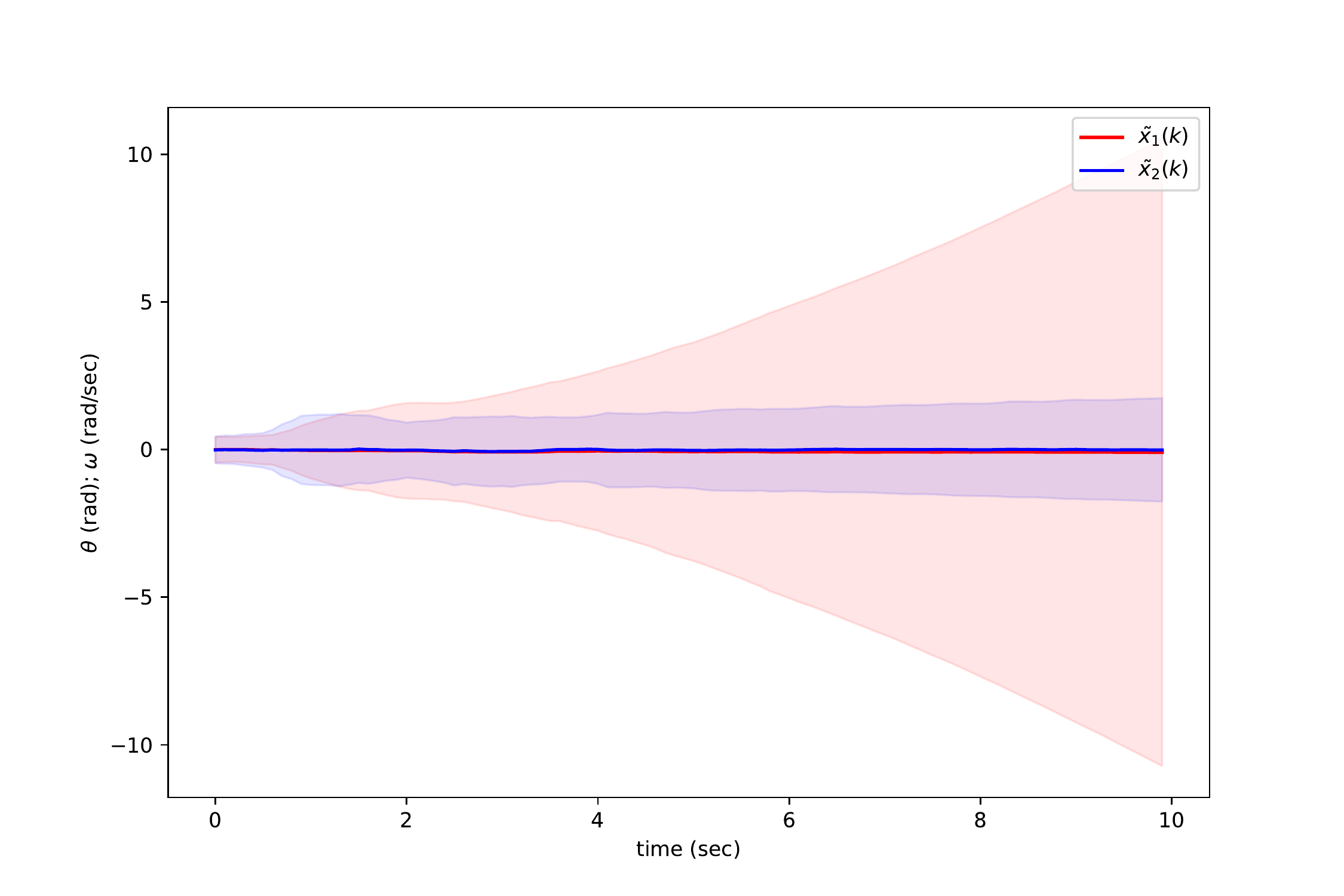}
		\end{minipage}}
		
		\subfloat[PF ($10^3$ particles)]{
			\begin{minipage}{.3\textwidth}
				\centering
				\includegraphics[width=0.95\textwidth]{./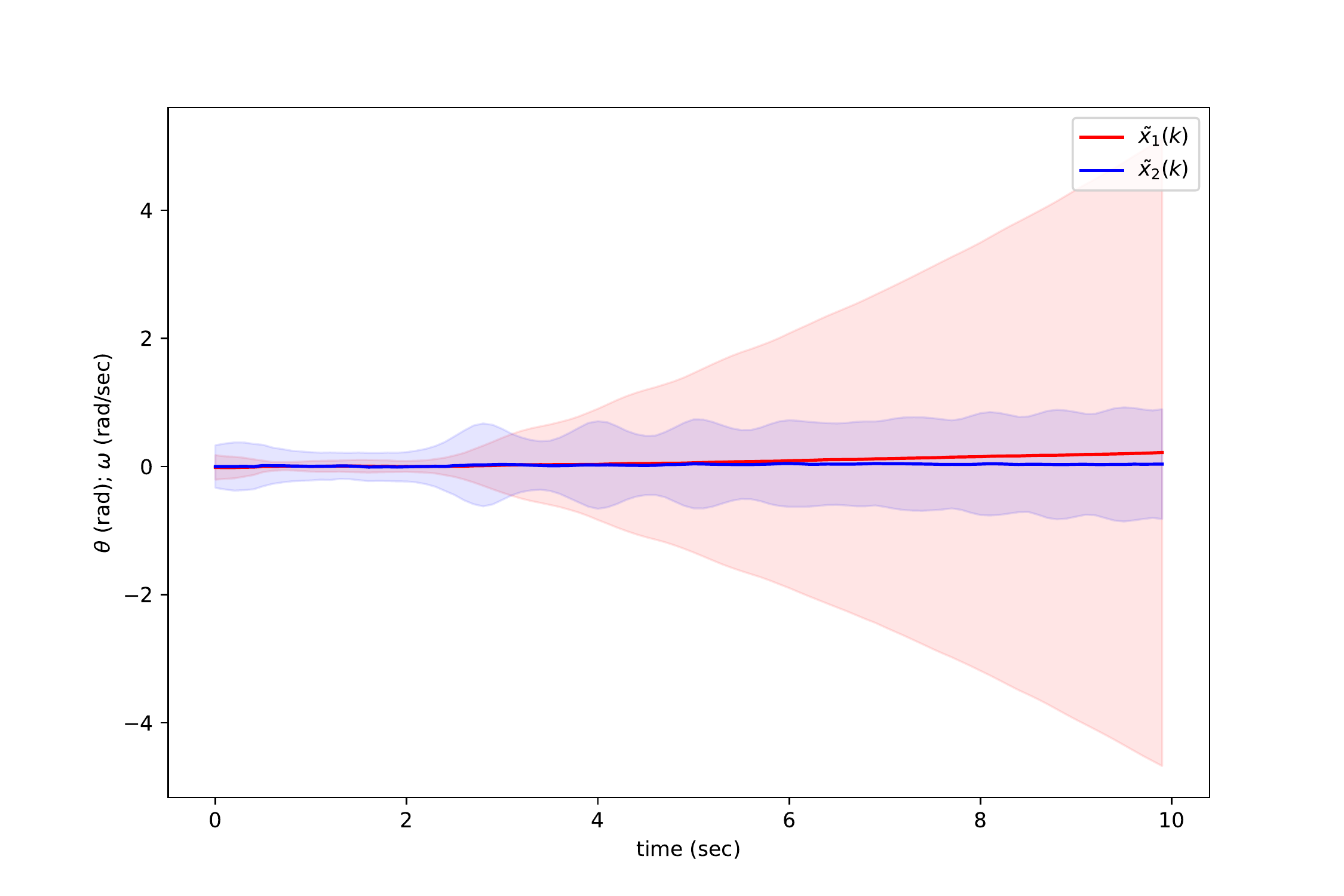}
		\end{minipage}}
		\subfloat[PF with noise variance $R=0.1$]{
			\begin{minipage}{.3\textwidth}
				\centering
				\includegraphics[width=0.95\textwidth]{./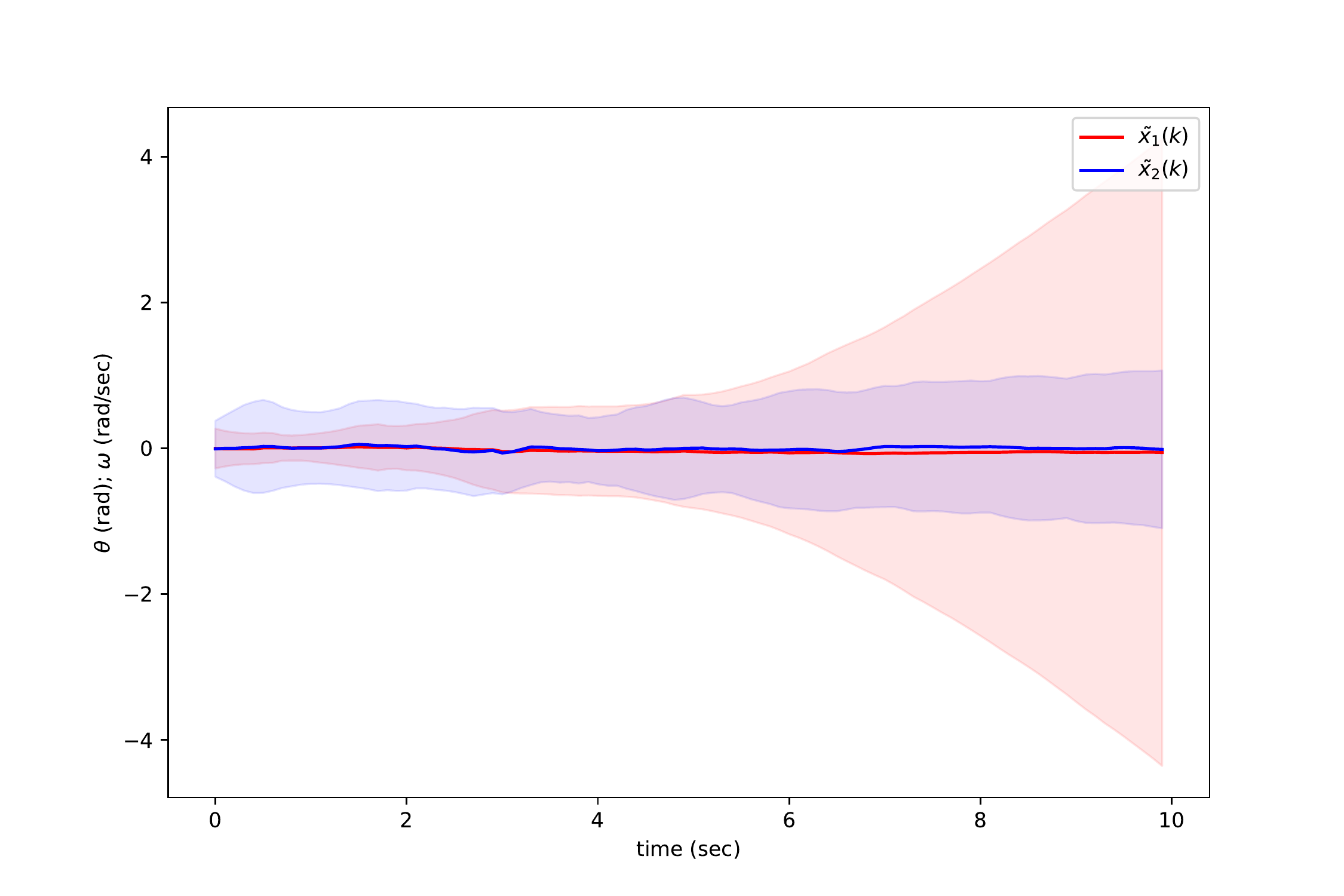}
		\end{minipage}}
		\subfloat[PF with measurement missing]{
			\begin{minipage}{.3\textwidth}
				\centering
				\includegraphics[width=0.95\textwidth]{./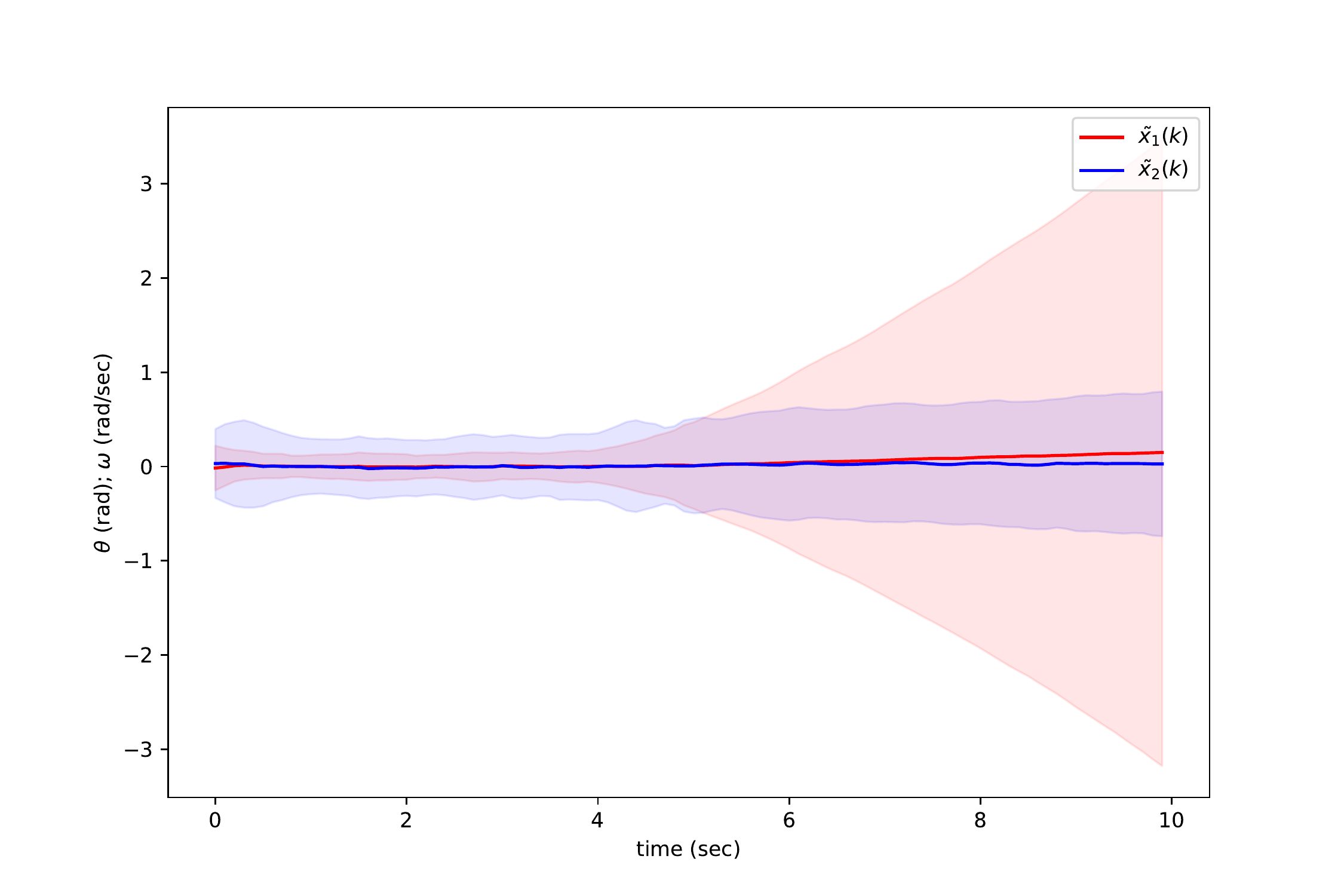}
		\end{minipage}}
		
		\subfloat[PF ($10^4$ particles)]{
			\begin{minipage}{.3\textwidth}
				\centering
				\includegraphics[width=0.95\textwidth]{./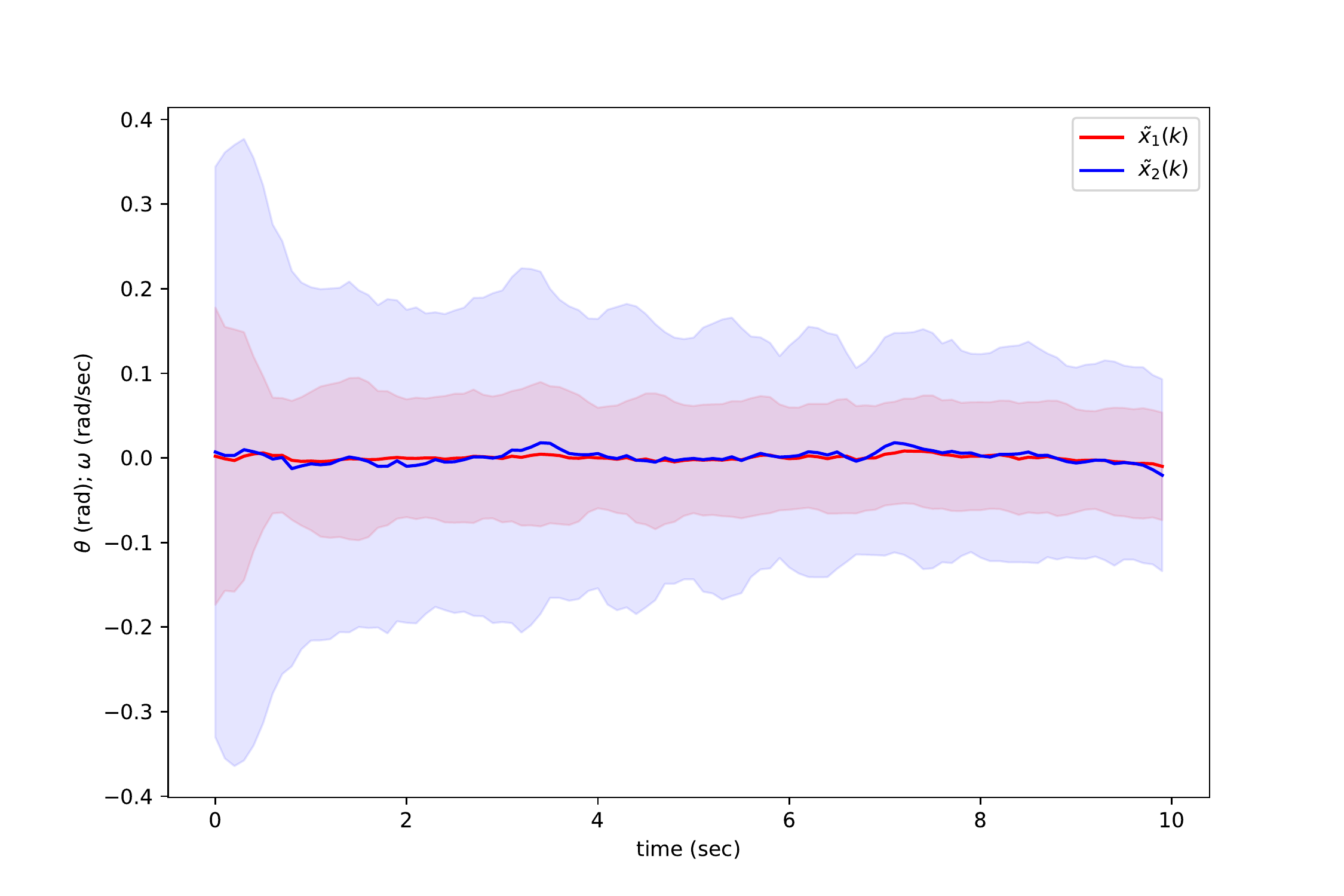}
		\end{minipage}}
		\subfloat[PF with noise variance $R=0.1$]{
			\begin{minipage}{.3\textwidth}
				\centering
				\includegraphics[width=0.95\textwidth]{./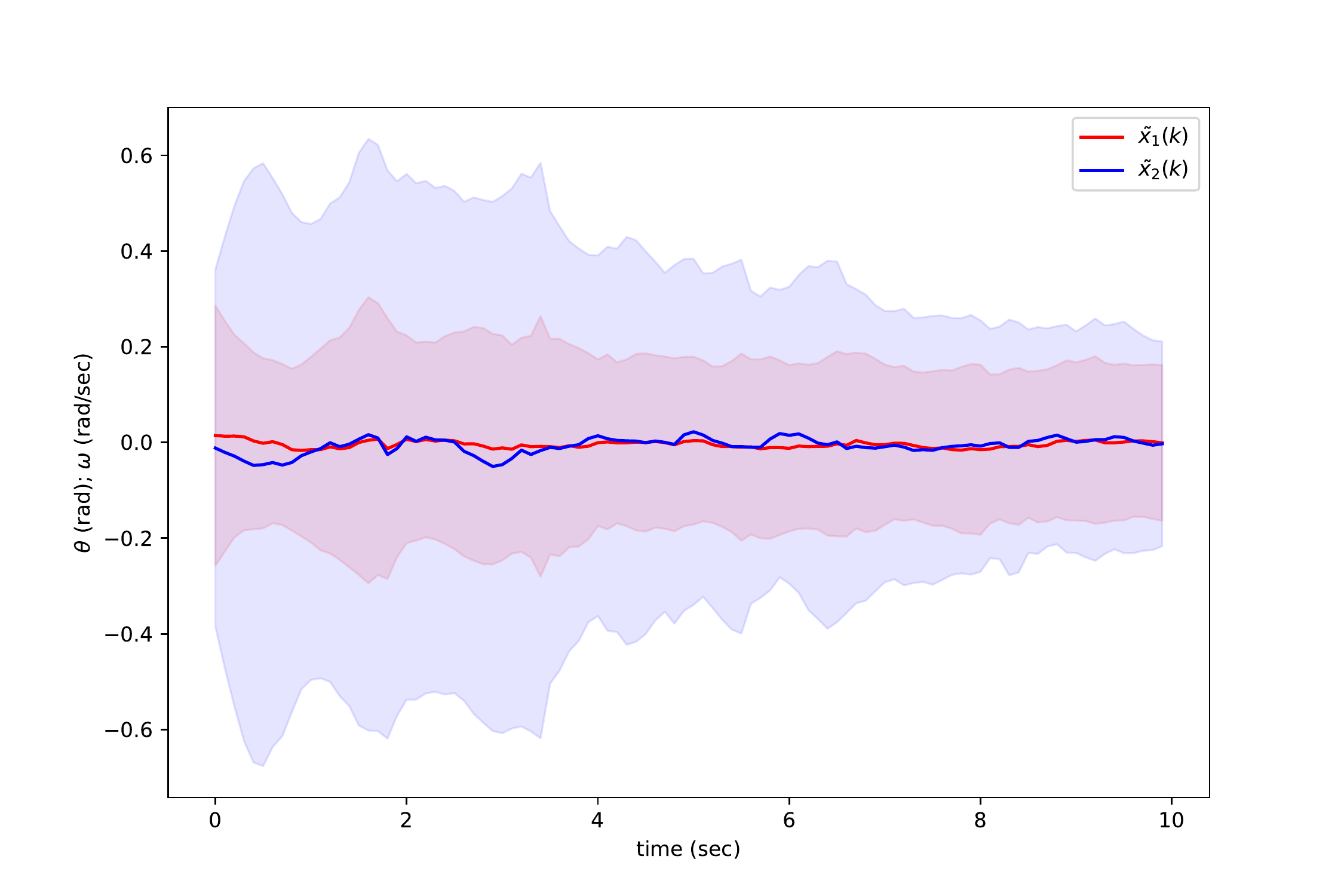}
		\end{minipage}}	
		\subfloat[PF with measurement missing]{
			\begin{minipage}{.3\textwidth}
				\centering
				\includegraphics[width=0.95\textwidth]{./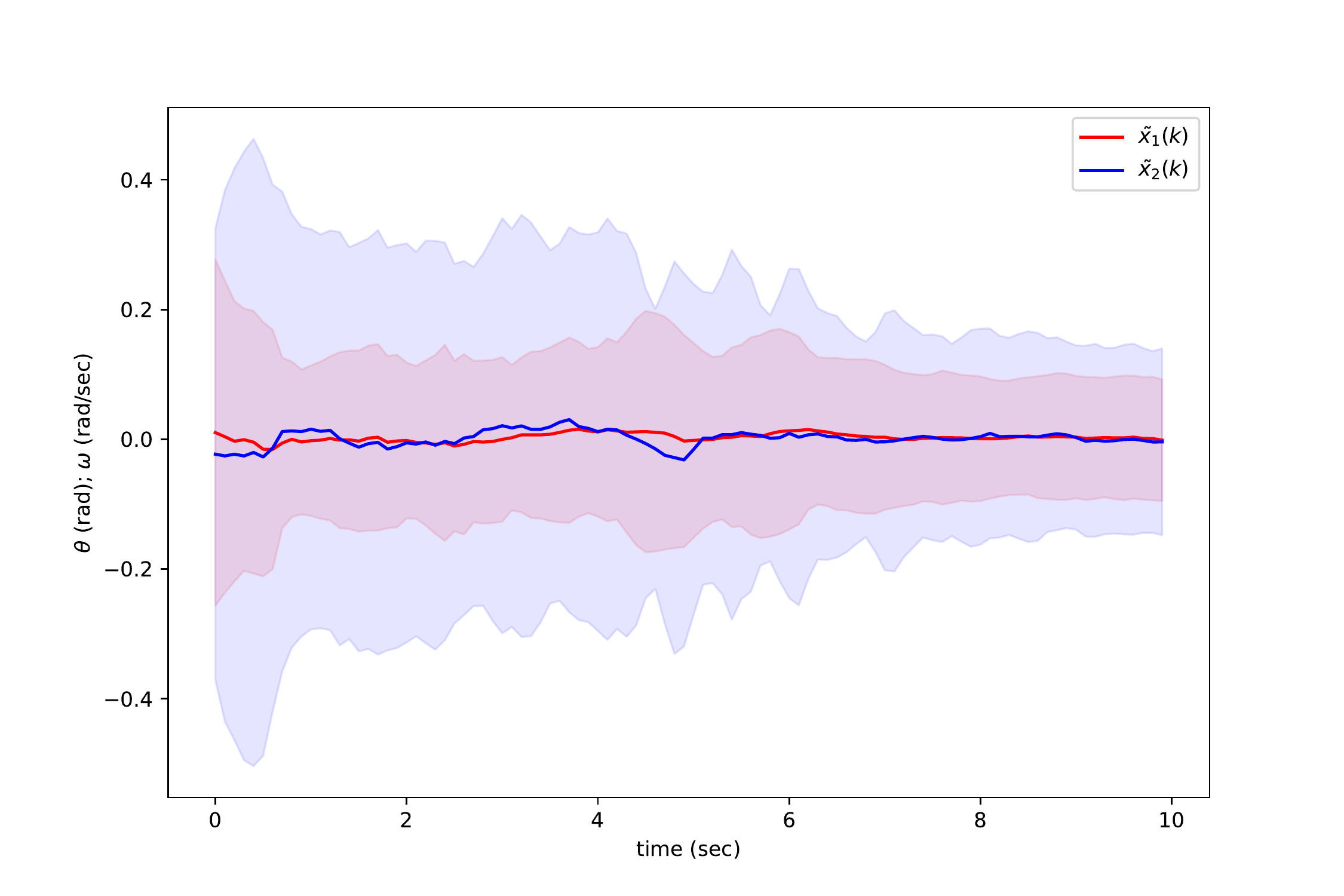}
		\end{minipage}}	
		
		\subfloat[LRLF]{	
			\begin{minipage}{.3\textwidth}
				\centering
				\includegraphics[width=0.95\textwidth]{./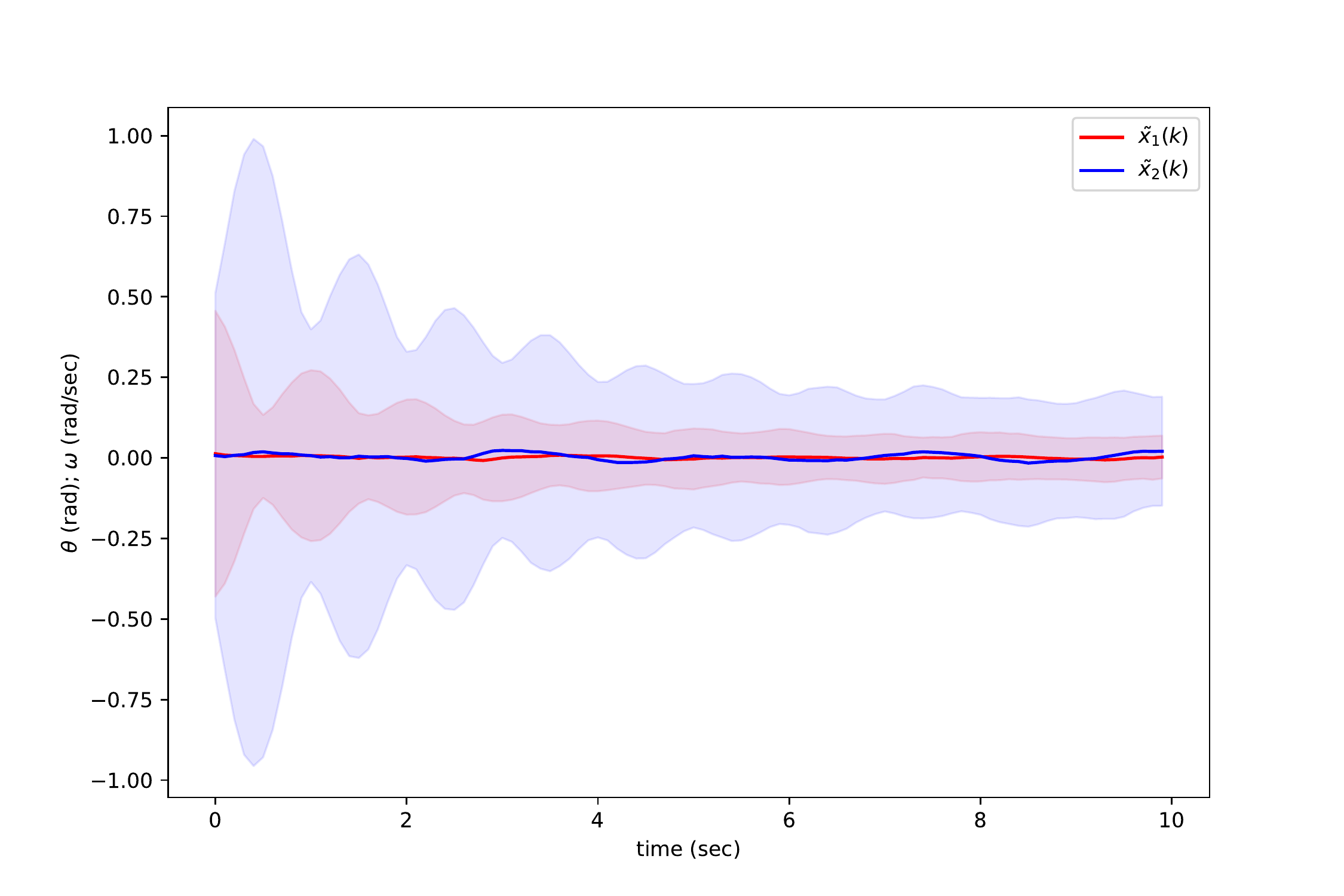}
		\end{minipage}}
		\subfloat[LRLF with noise variance $R=0.1$]{
			\begin{minipage}{.3\textwidth}
				\centering
				\includegraphics[width=0.95\textwidth]{./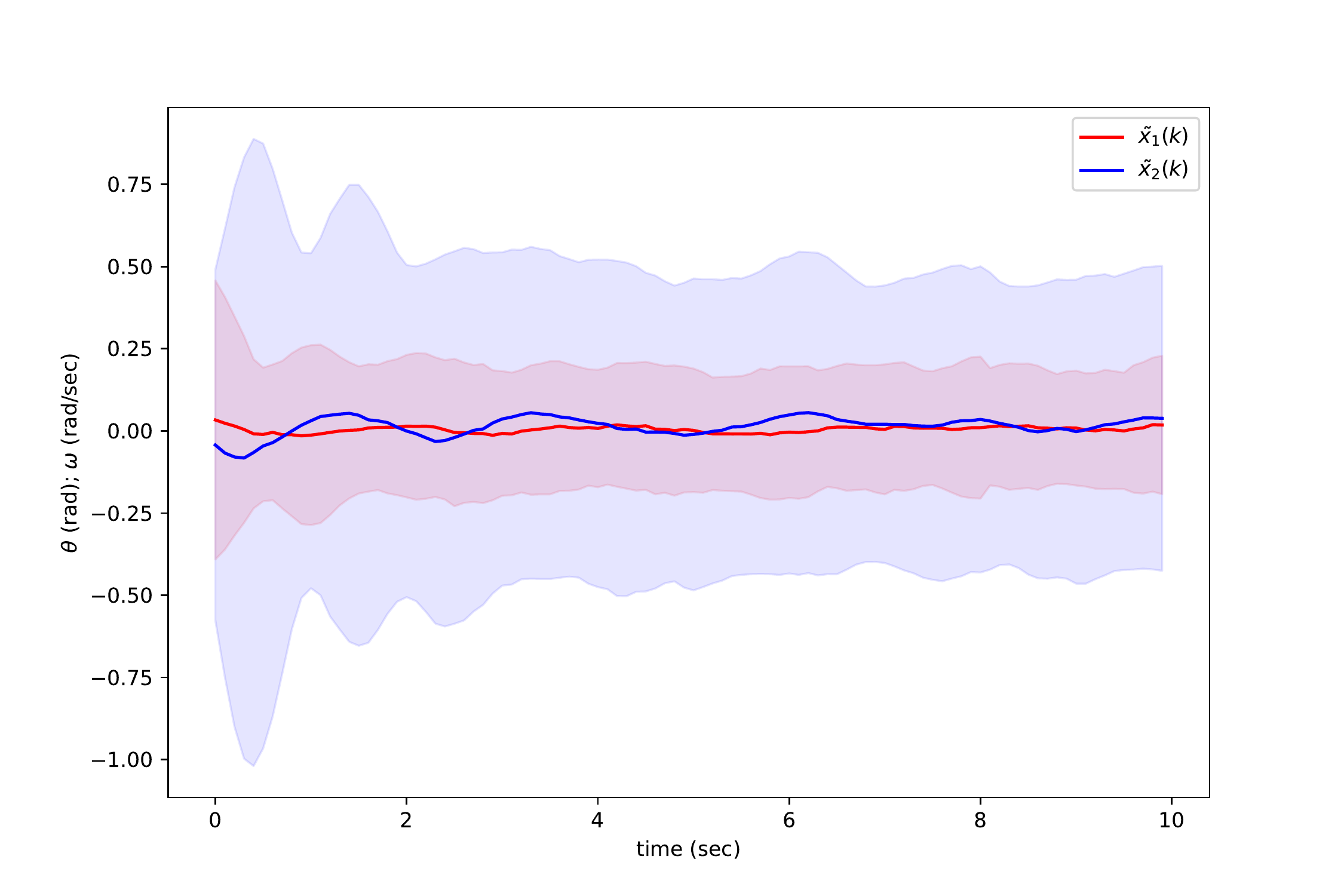}
		\end{minipage}}
		\subfloat[LRLF with measurement missing]{
			\begin{minipage}{.3\textwidth}
				\centering
				\includegraphics[width=0.95\textwidth]{./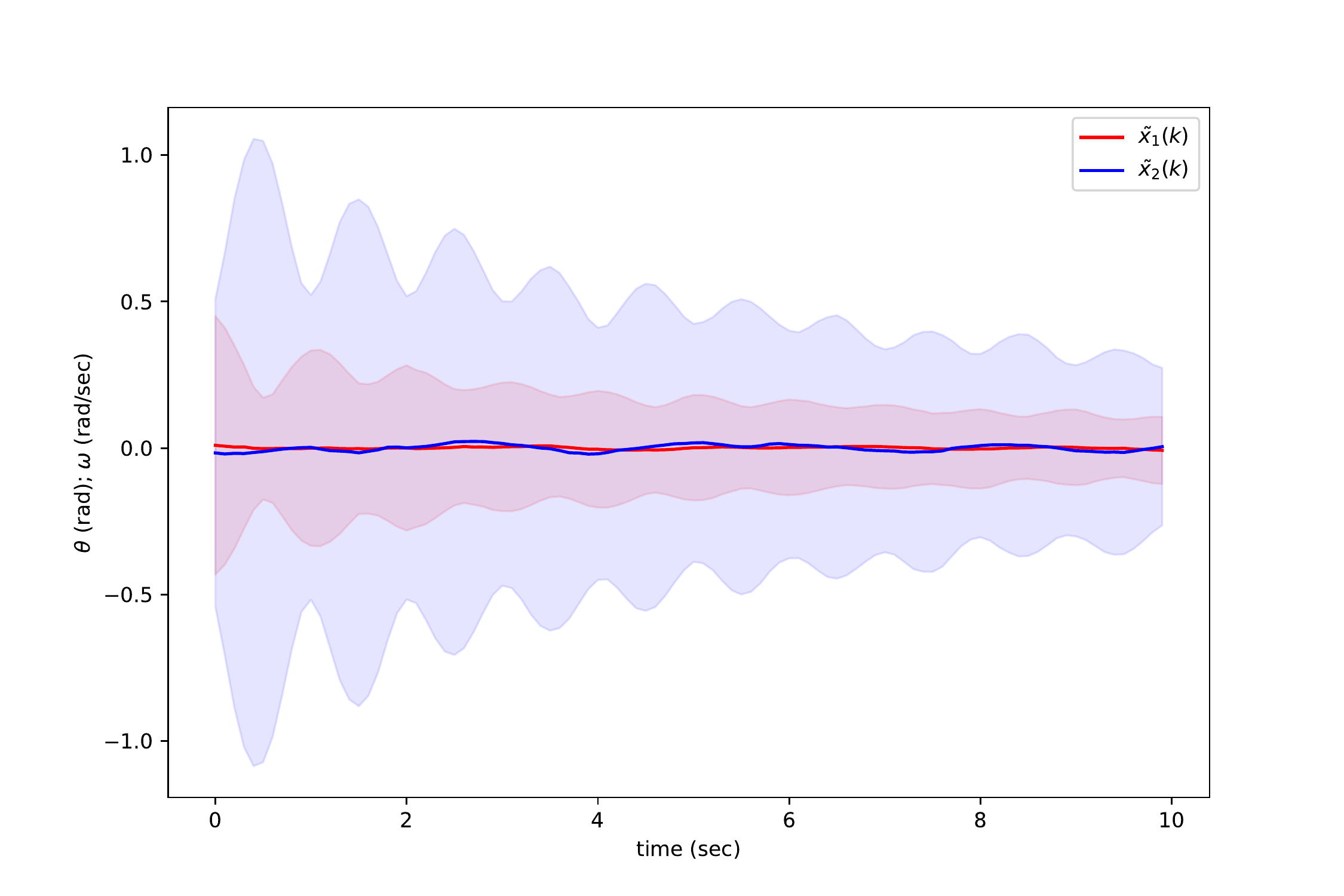}
		\end{minipage}}
		\caption{{\small State estimate error of different methods under different conditions. The left column corresponds to those with simulation setup with the measurement noise variance $R=0.01$; the middle column corresponds to those with the increased measurement noise variance $R=0.1$; the right column corresponds to those with measurement missing happening with the probability $0.5$. Solid line indicates the average estimate error $\tilde{x}_{1,k}$  (red) and $\tilde{x}_{2,k}$  (blue) and shadowed region for the 1-SD confidence interval. All results are obtained from 500 Monte Carlo runs.}}
		\label{fig:estimate_error}
	\end{figure*}

	\subsection{Estimation performance}
	\begin{figure}
		\centering
		\includegraphics[width=0.45\textwidth]{./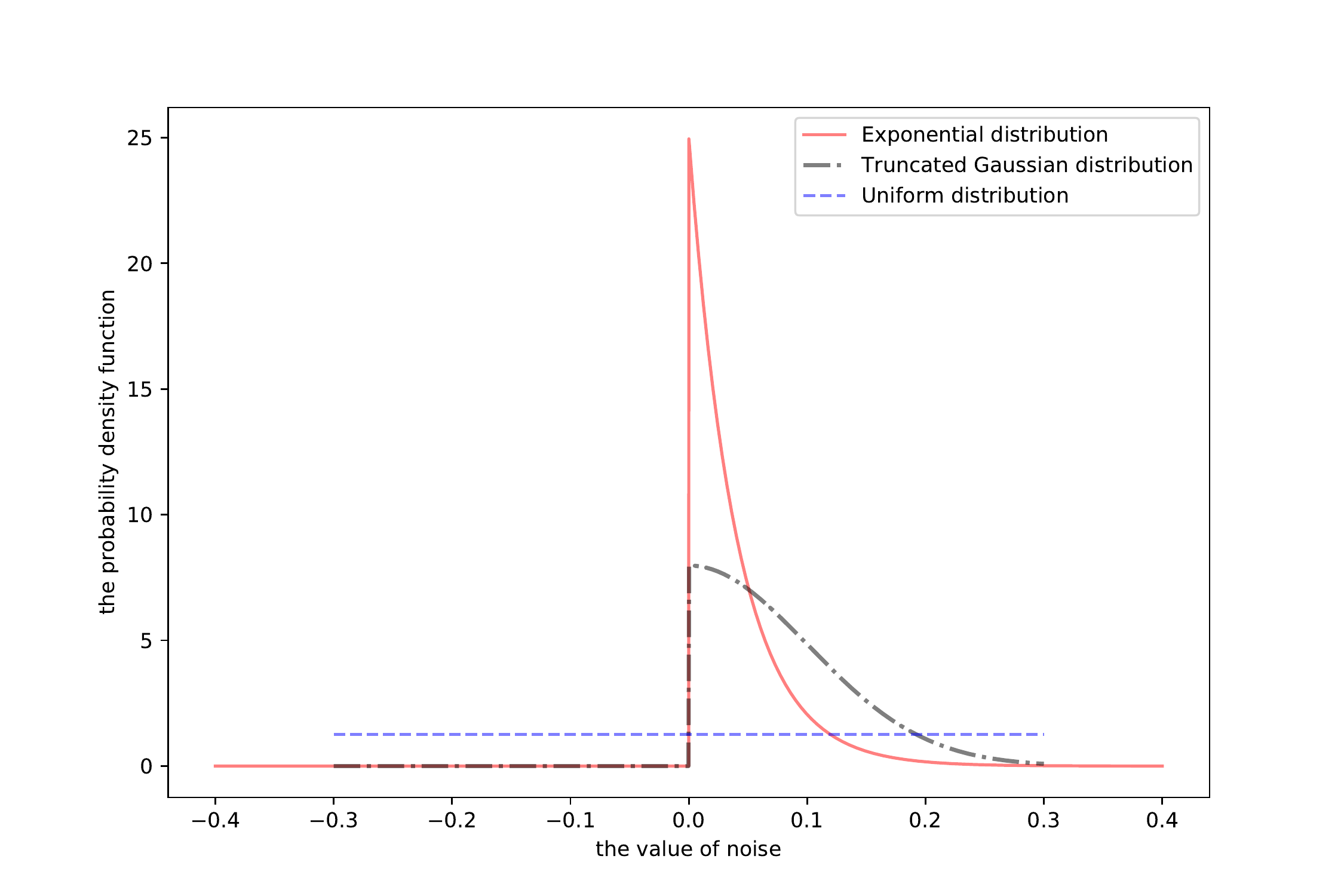}
		\caption{The PDFs of the three kinds of non-Gaussian noise, e.g., the truncated Gaussian, the exponential and the uniform distributions that we tested with the LRLF.}
		\label{fig:noise}
		\vspace{-0.5cm}
	\end{figure}

	\begin{figure*}[tb]
		\centering
		\subfloat[Truncated Gaussian distribution]{	
			\begin{minipage}{.33\textwidth}
				\centering
				\includegraphics[width=0.95\textwidth]{./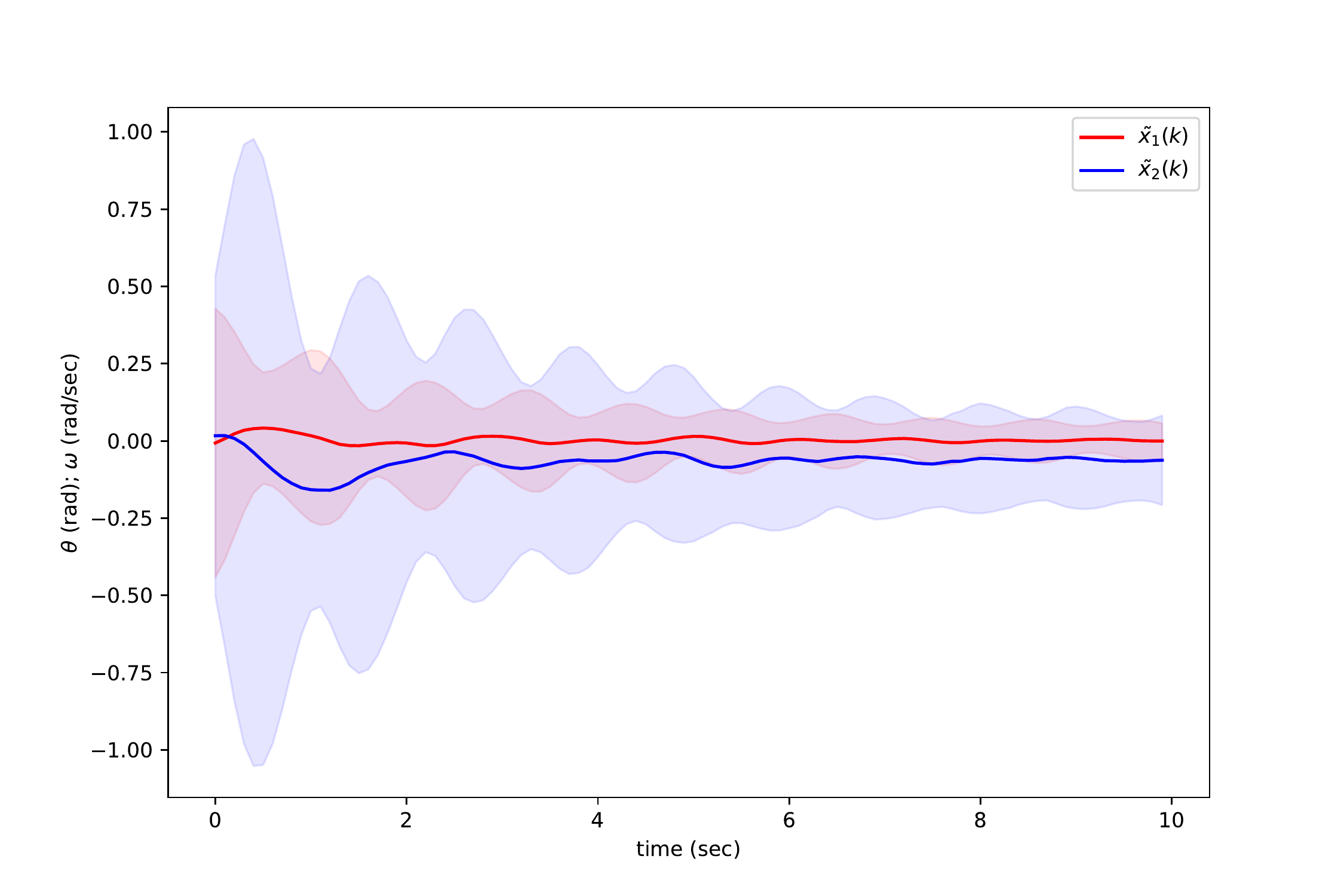}
		\end{minipage}}
		\subfloat[Exponential distribution]{
			\begin{minipage}{.33\textwidth}
				\centering
				\includegraphics[width=0.95\textwidth]{./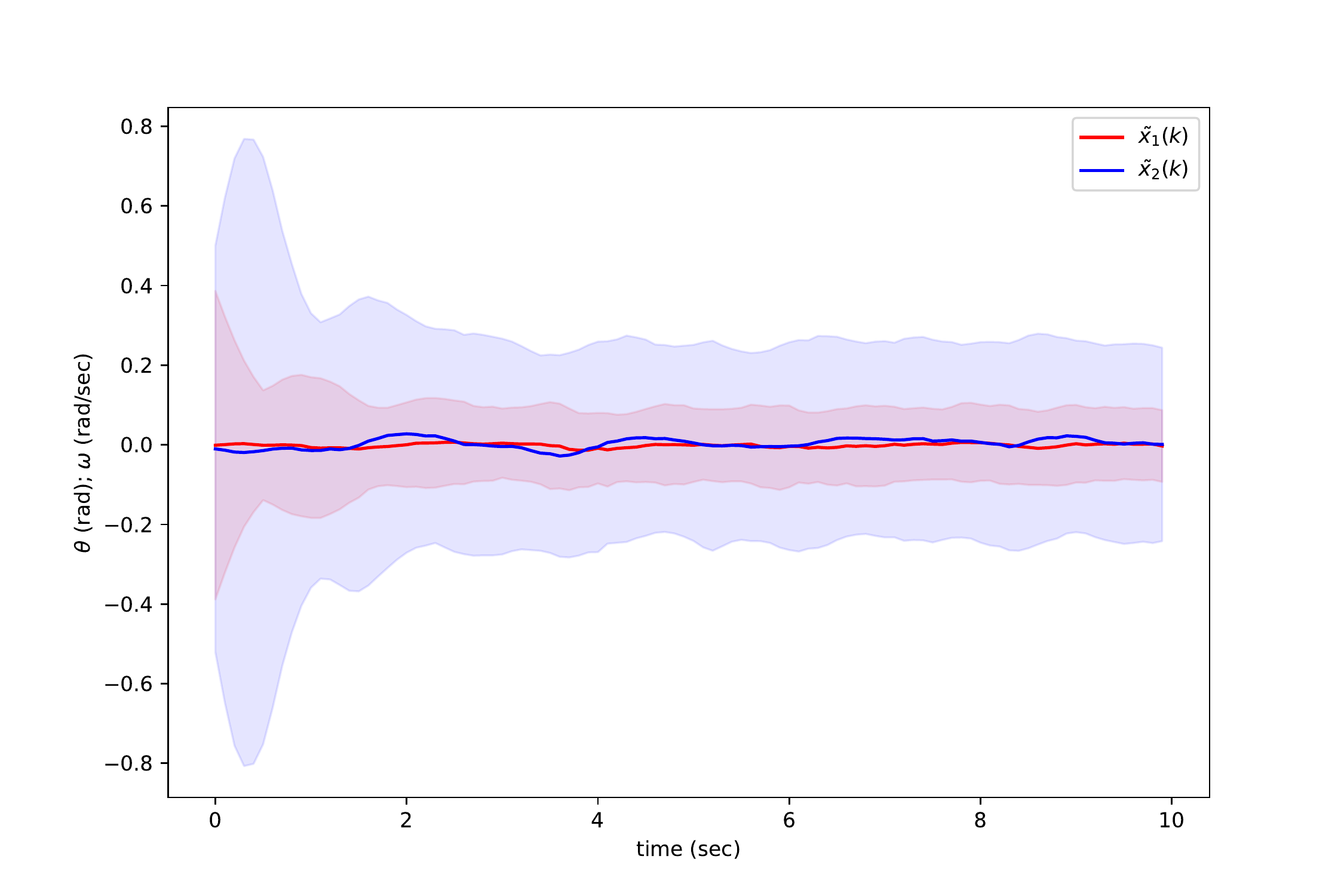}
		\end{minipage}}
		\subfloat[uniform distribution]{
			\begin{minipage}{.33\textwidth}
				\centering
				\includegraphics[width=0.95\textwidth]{./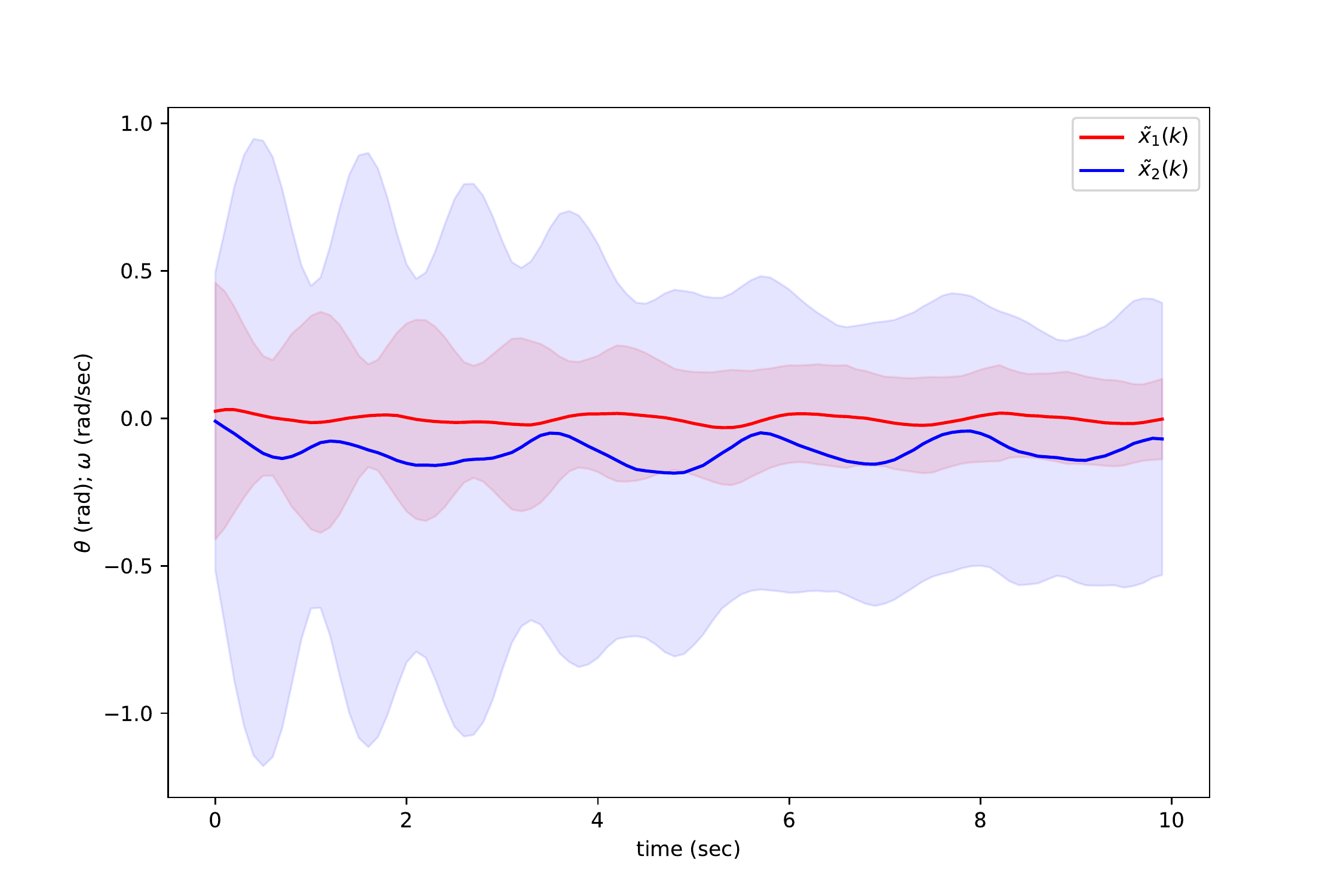}
		\end{minipage}}
		\caption{State estimate error of the LRLF with different kinds of measurement noise shown in Fig. \ref{fig:noise}. Solid line indicates the average estimate error $\tilde{x}_{1,k}$  (red) and $\tilde{x}_{2,k}$  (blue) and shadowed region for the 1-SD confidence interval. The results are obtained from 500 Monte Carlo runs.}
		\label{fig:lrlf-noise}
		\vspace{-0.5cm}
	\end{figure*}
	
	The proposed LRLF is compared with the EKF, UKF and PF. For each method, a number of $500$ Monte Carlo tests were run with random initial state given in \eqref{eq:initial_state} and \eqref{eq:initial_estimate}. The average value and SD of estimate error $\tilde{x}_{k}$ are illustrated in the left column of  Fig. \ref{fig:estimate_error} (see Figs. \ref{fig:estimate_error}(a, d, g, j, m)) for EKF, UKF, PF and LRLF. These comparisons show that only the PF with a larger number of particles ($10^4$) has comparable performance with the LRLF.

	To verify estimation performance of the LRLF for nonlinear systems with non-Gaussian noise, we tested the LRLF over three typical non-Gaussian distributions as shown in Fig. \ref{fig:noise}. Specifically, the measurement noise $v_k$ of the pendulum model in \eqref{eq1:pendulum}-\eqref{eq2:pendulum} is assumed to take three kinds of non-Gaussian probability distributions: 1) a Gaussian distribution $\mathcal{N}(0,0.01)$ truncated between the interval $[0,1]$; 2)  a uniform distribution at the interval $[-0.3, 0.3]$; or 3) an exponential distribution with the PDF
	$p(w_k)= \left\{{\begin{array}{cc}
		0.04\exp{(-0.04w_k)} & \text{if } w_k \geq 0 \\
		0 & \text{if } w_k < 0
		\end{array}}. \right.$
	A number of 500 Monte Carlo simulations with random initial state in \eqref{eq:initial_state} and \eqref{eq:initial_estimate} were test for each kind of measurement noise. The average value and SD of estimate error $\tilde{x}_{k}$ are illustrated in Fig. \ref{fig:lrlf-noise}. It is found that the LRLF still generates state estimate with bounded estimate errors under all the three kinds of non-Gaussian noises.

	\subsection{Robustness of the estimator}
	
	We considered the robustness of state estimators for two scenarios. (1) Against larger measurement noise: in the inference, the measurement noise variance increases from $0.01$ to $0.1$. (2) Against missing measurement: $y_k$ is sampled/contaminated randomly with Bernoulli distribution, in which $p(\text{missing})=p(\text{not missing})=0.5$. This means half of the measurements are not received/set to zeros during inference.
	
	\subsubsection{Larger measurement noise}
	
	Similarly, we run $500$ Monte Carlo tests with random initial states and initial state estimates given in \eqref{eq:initial_state}-\eqref{eq:initial_estimate} for each state estimator. The average value and variance of estimate error $\tilde{x}_{k}$ are illustrated in Figs.\ref{fig:estimate_error} (b), (e), (h), (k), (n). From Figs.\ref{fig:estimate_error} (b), (e), (h), it can be found that under a large measurement noise with variance 0.1, the estimates of the EKF, UKF and PF ($10^3$ particles) all diverge, while the estimate error of our proposed LRLF and PF ($10^4$ particles) increase compared with those under noise with variance $0.01$ but are both still bounded, as shown in Figs.\ref{fig:estimate_error} (k), (n). The LRLF produces comparable estimation performance to the computationally expensive PF with $10^4$ particles. It is worth pointing out that our proposed state estimator, though trained under noise with the variance of 0.01, still works well under a changing noise level, showing its robustness to the covariance shift of system noise.  From the left and middle column in Fig.~\ref{fig:estimate_error}, it is found that the EKF, UKF and PF ($10^3$ particles) have poor performance when measurement noise is large.
	
	\subsubsection{Missing measurement}
	
	Missing measurement is a common network-induced phenomenon and has been extensively investigated in state estimation for decades \cite{sinopoli2004kalman,wang2012h}. In our experiment, the measurement are sampled/contaminated randomly with a Bernoulli distribution, e.g., $p(\text{missing})=p(\text{no missing})=0.5$. The average value and variance of estimate error $\tilde{x}_{k}$ are illustrated in Fig. \ref{fig:estimate_error} (c), (f), (i), (l), (o). It is found that the EKF, UKF and PF with $10^3$ particles all diverge, LRLF has comparable performance as the PF with $10^4$ particles. These comparisons imply that LRLF is robust to randomly missing measurement.
	
	\begin{figure}
		\centering
		\includegraphics[width=0.45\textwidth]{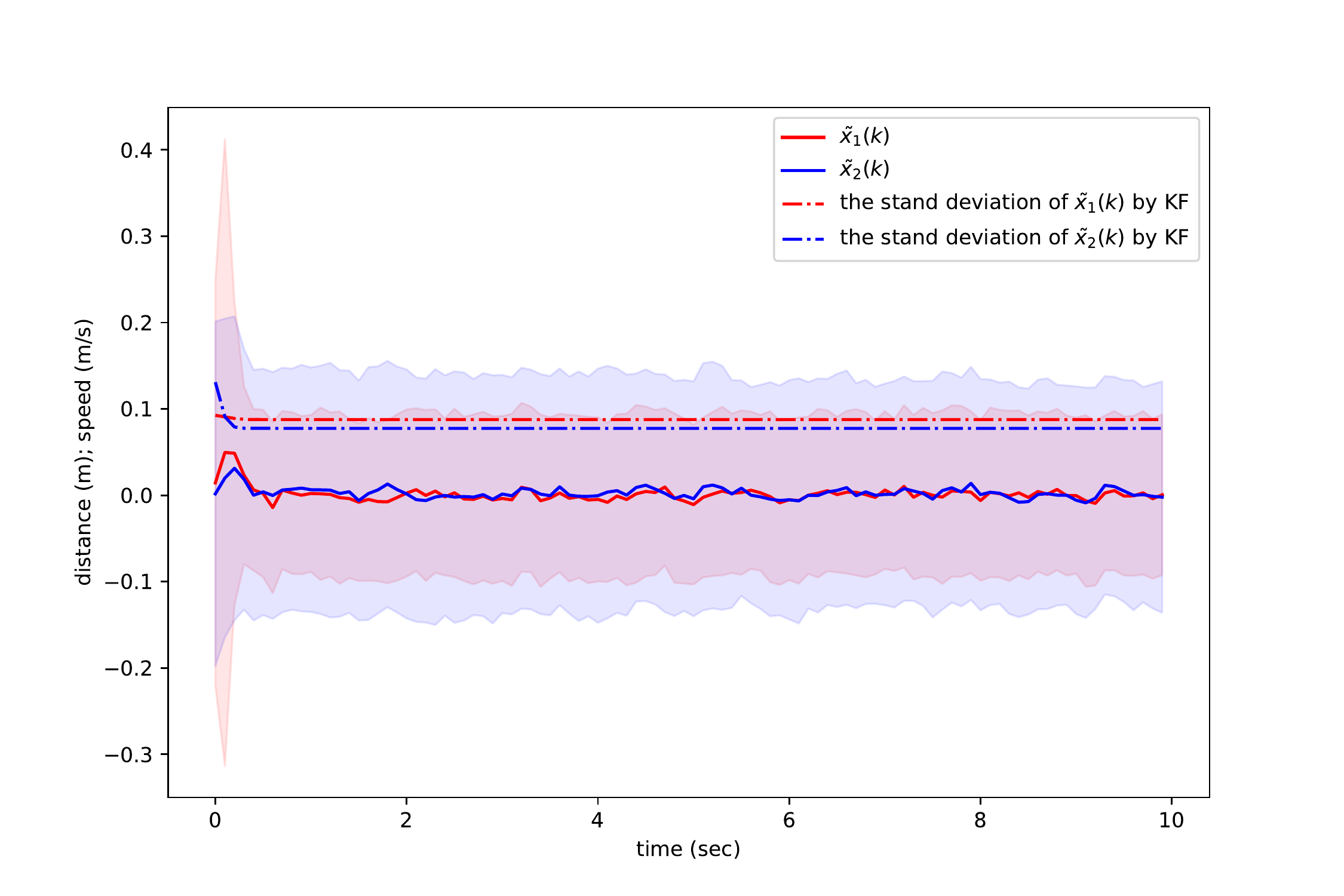}
		\caption{State estimate error of the LRLF vs the SD of state estimate of the Kalman filter. Solid line indicates the average estimate error $\tilde{x}_{1,k}$  (red) and $\tilde{x}_{2,k}$  (blue) and shadowed region for the 1-SD confidence interval of the LRLF. The results of LRLF are obtained with 500 Monte Carlo runs.}
		\label{fig:linear_systems}
		\vspace{-0.5cm}
	\end{figure}
	
	\subsection{A linear system case study}
	
	We also consider the linear systems, to compare with the classic Kalman filter which is known to be optimal for linear systems with Gaussian noise. The example is to track a vehicle running at a constant but unknown speed and subject to random noise \cite{sarkka2013bayesian}. The motion dynamics of the vehicle is given as follows:
	\begin{equation}
	\begin{bmatrix}
	x_{1,k+1}\\ x_{2,k+1}
	\end{bmatrix} = \begin{bmatrix}
	1&1\\0&1
	\end{bmatrix}\begin{bmatrix}
	x_{1,k}\\ x_{2,k}
	\end{bmatrix}+\begin{bmatrix}
	0\\1
	\end{bmatrix}w_{k}
	\end{equation}
	and the measurement model is given as follows:
	\begin{equation}
	y_{k}=\begin{bmatrix}
	1&0
	\end{bmatrix}x_{k}+v_{k}
	\end{equation}
	where $x_{1,k}$, $x_{2,k}$ are the distance and speed of the vehicle at time instant $k$, respectively. The Gaussian noise $w_{k} \sim \mathcal{N}(0,0.01)$ and $v_{k} \sim \mathcal{N}(0,0.02)$, and the initial state and the initial state estimate  are $x_0=\begin{bmatrix}
	0\\10
	\end{bmatrix}$, $\hat{x}_0 \sim \mathcal{N} \bigg(\begin{bmatrix}
	0\\10
	\end{bmatrix}, \begin{bmatrix}
	0.02&0\\0&0.03
	\end{bmatrix}\bigg)$, respectively.
	
	We use the same experimental setup as in the previous example of tracking pendulum. The estimate error of the LRLF is obtained from $500$ Monte Carlo simulations, as shown in Fig. \ref{fig:linear_systems}. The SD of state estimate error of the Kalman filter is plotted in Fig. \ref{fig:linear_systems} for comparison. It can be found that our proposed LRLF estimate both the distance and speed of the vehicle quite accurately, with slightly bigger estimate error than that of the optimal Kalman filter, which is expected since no state estimators can perform better than the optimal Kalman filter for linear stochastic systems with Gaussian noise.
	
	{\subsection{Airborne target tracking case study}
		We consider the scenario presented in \cite{liu2019particle}, where an unmanned aerial vehicle equipped with a gimballed camera to track a ground vehicle manoeuvring on a road section. Assume that the bearing-only camera platform is kept at the altitude $z^s = 100 m$ above the origin of the local coordinate  and the road is flat. The camera provides the azimuth angle ($\zeta$) and elevation angle ($\eta$) to the target with respect to the camera platform, as described by the following observation model:
		\begin{equation}\label{eq: ex3-obser}
		z_k = h(x_k)=\begin{bmatrix}
		\zeta_k \\ \eta_k
		\end{bmatrix}=\begin{bmatrix}
		\arctan_2(y_k,x_k)\\ \arctan_2(z^s, \sqrt{x_k^2+y_k^2})
		\end{bmatrix}+v_k
		\end{equation}
		where $x_k=\begin{bmatrix}
		s^x_k,s^y_k,\dot{s}^x_k,\dot{s}^y_k
		\end{bmatrix}^T$ is the target vehicle's state of position and speed components in $x$ and $y$ direction. The sensor noise $v_k$ is zero-mean Gaussian noise with the covariance $R=\begin{bmatrix}
		8&0\\0&0.002
		\end{bmatrix}$. 
		
		The target vehicle dynamics is described by a white noise acceleration motion model \cite{li2003survey}:
		\begin{equation}\label{eq: ex3-motion}
		x_{k+1}=\begin{bmatrix}
		1&0&T&0\\0&1&0&T\\0&0&1&0\\0&0&0&1
		\end{bmatrix}x_k+\begin{bmatrix}
		0.5T^2&0\\0&0.5T^2\\T&0\\0&T
		\end{bmatrix} w_k   
		\end{equation}
		where the sampling time $T=0.1 s$ and the process noise is zero-mean Gaussian noise with the covariance $Q=\begin{bmatrix}
		1&0\\0&1
		\end{bmatrix}$, the initial state and the initial state estimate are $x_0=\begin{bmatrix}
		98&0&0&10
		\end{bmatrix}^T$, $\hat{x}_0 \sim \mathcal{N} \bigg(\begin{bmatrix}
		98\\0\\0\\10
		\end{bmatrix}, \begin{bmatrix}
		25&0&0&0\\0&25&0&0\\0&0&1&0\\0&0&0&1
		\end{bmatrix}\bigg)$, respectively.
		
		We use the same experimental setup as in the previous example on tracking pendulum.  The state estimate of the  LRLF obtained from 500 Monte  Carlo simulations is shown in  Fig.~\ref{fig:example_3}. It shows that the proposed LRLF can track the target vehicle's state quite well.    
		
		\begin{figure}
			\centering
			\includegraphics[width=0.45\textwidth]{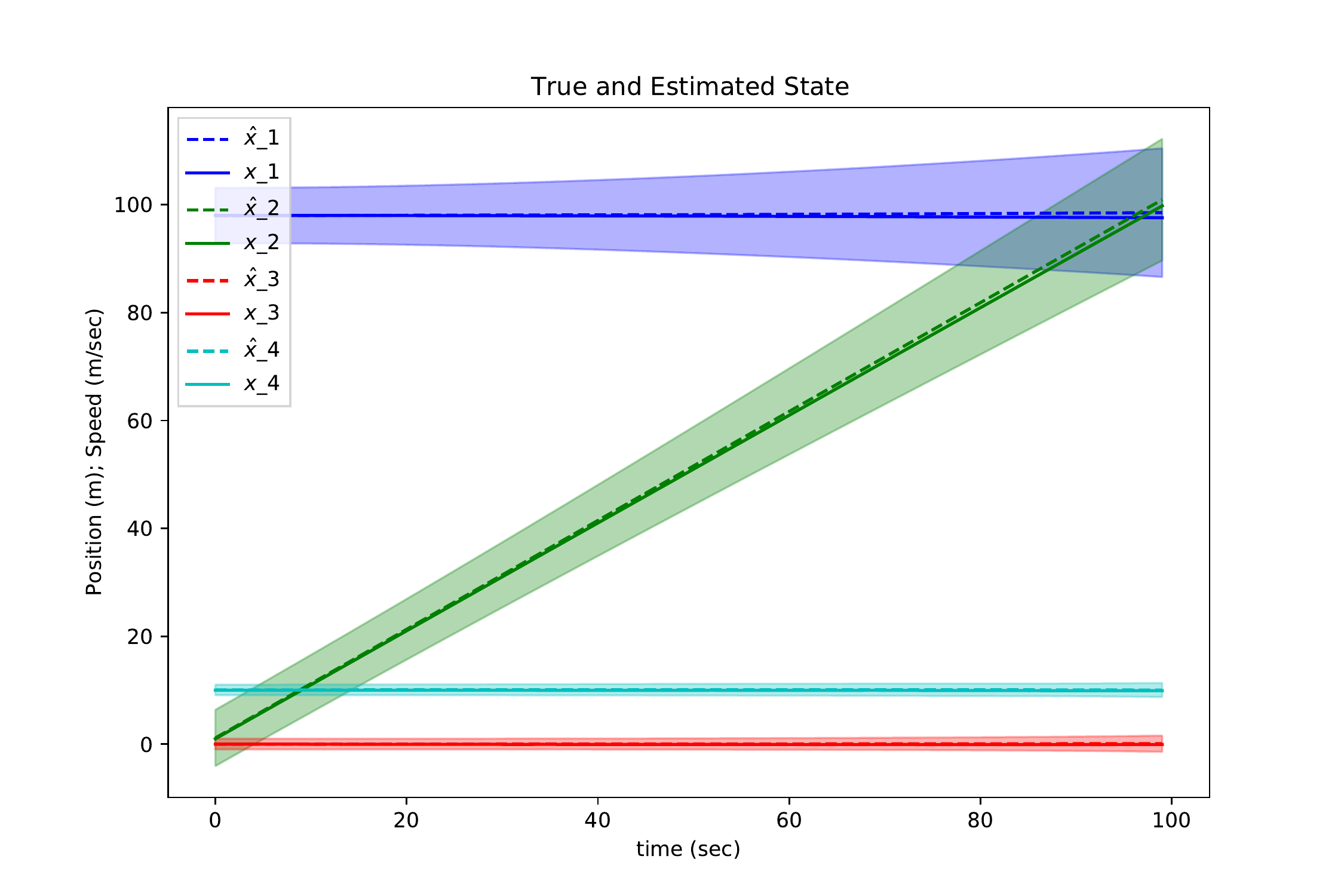}
			\caption{The true state of target vehicle and state estimate using the LRLF. Solid lines indicate true state and dashed lines the state estimate  and shadowed region for the 1-SD confidence interval of the LRLF. The results of LRLF are obtained with 500 Monte Carlo runs.}
			\label{fig:example_3}
			\vspace{-0.5cm}
		\end{figure}
	}
	
	\section{Conclusion and discussion}\label{sec:conclusion}
	This paper has combined Lyapunov's method in control theory and deep reinforcement learning to design state estimator for discrete-time nonlinear stochastic systems. We theoretically prove the convergence of bounded estimate error solely using the data in a model-free manner. In our approach, the filter gain is approximated by a deep neural network and trained offline. During inference, the learned filter can be deployed efficiently without extensive online computations. Simulation results show the superiority of our state estimator design method over existing nonlinear filters, in terms of estimate convergence even under some system uncertainties such as covariance shift in system noise and randomly missing measurements. {As initial research developing the  RL approach for state estimation, there are still quite a few issues that have not yet addressed in the work. For example, How to quantify uncertainty of the state estimate, i.e., the covariance of state estimate error?  What is the convergence bound of state estimate error with finite samples? They will be left as our future research directions.} 
	
	\bibliography{References}
	\bibliographystyle{IEEEtran}

\end{document}